\title{\LARGE \bf
Synthesis of Provably Correct Autonomy Protocols\\ for Shared Control
}
\author{Murat Cubuktepe, Nils Jansen, Mohammed Alsiekh, Ufuk Topcu\thanks{M. Cubuktepe and U. Topcu are with the Department of Aerospace Engineering
and Engineering Mechanics, University of Texas at Austin, 201 E 24th St, Austin, TX 78712, USA. Nils Jansen is with the Department of Software Science, Radboud University Nijmegen, Comeniuslaan 4, 6525 HP Nijmegen, the Netherlands. Mohammed Alsiekh was with the Institute for Computational
Engineering and Sciences, University of Texas at Austin, 201 E 24th
St, Austin, TX 78712, USA. email:(\{mcubuktepe,malsiekh,utopcu\}@utexas.edu, n.jansen@science.ru.nl).}
}
\DeclarePairedDelimiter\ceil{\lceil}{\rceil}
\pgfplotsset{compat=1.8}
\tikzset{outline/.style args={#1}{%
  draw=#1,thick,fill=#1!50}}
\begin{document}

\maketitle
\thispagestyle{empty}
\pagestyle{empty}

\begin{abstract}
We synthesize shared control protocols subject to probabilistic temporal logic specifications.
More specifically, we develop a framework in which a human and an autonomy protocol can issue commands to carry out a certain task. 
We blend these commands into a joint input to a robot. 
We model the interaction between the human and the robot as a Markov decision process (MDP) that represents the shared control scenario. 
Using inverse reinforcement learning, we obtain an abstraction of the human's behavior and decisions.
We use randomized strategies to account for randomness in human's decisions, caused by factors such as complexity of the task specifications or imperfect interfaces. 
We design the autonomy protocol to ensure that the resulting robot behavior satisfies given safety and performance specifications in probabilistic temporal logic.
Additionally, the resulting strategies generate behavior as similar to the behavior induced by the human's commands as possible. 
We solve the underlying problem efficiently using quasiconvex programming.
Case studies involving autonomous wheelchair navigation and unmanned aerial vehicle mission planning showcase the applicability of our approach.
\end{abstract}

\newif\ifufuk
\ufuktrue %false here if you don't want the subsections in the intro, true if you want the subsections for ufuk
\newcommand{\ufuksubsection}[1]{\ifufuk\subsection{#1}\fi}
\newif\ifnocomments
\nocommentsfalse %false here if you want comments, true if you don't want comments
\ifnocomments 
\renewcommand{\nj}[1]{}
\renewcommand{\lmc}[1]{}
\fi

\section{Introduction}\label{sec:intro}
%\ufuksubsection{introduction of the problem}
%\lmc{one of the refs was bothered about the abundance of the emphs, feel free to cut as much as you want through the intro}
In shared control, a robot executes a task to accomplish the goals of a human operator while adhering to additional safety and performance requirements.
Applications of such human-robot interaction include remotely operated semi-autonomous wheelchairs~\cite{wheelchair-demillan}, robotic teleoperation~\cite{javdani2015shared}, and human-in-the-loop unmanned aerial vehicle mission planning~\cite{feng2016synthesis}. 
A human operator issues a command through an input interface, which maps the command directly to an action for the robot. 
The problem is that a sequence of such actions may fail to accomplish the task at hand, due to limitations of the interface or failure of the human operator in comprehending the complexity of the problem.
Therefore, a so-called \emph{autonomy protocol} provides assistance for the human in order to complete the task according to the given requirements.

%instead of directly executing the human's command, an \emph{autonomy protocol} provides another command, and the commands are blended jointly as an input to the robot. 

%In this work, we aim to provide safety and increase the performance by helping the human while performing a certain task.

%For MDPs, quantitative properties like ``the probability to reach a bad state is lower than $0.01$'' or ``the cost of reaching a goal is below a given threshold'' can be formally \emph{verified}.
%Such verification results provide rigorous guarantees as opposed to mere statistical confidence bounds based on, \eg, simulation or testing of a system model.  

%\nj{I repeat my comment. Here, he only argument for shared autonomy is that a human may not be able to control a robot sufficiently. Then, later comes the comprehension. The reasoning seems off to me.}

%\ufuksubsection{limitations of the existing work}
At the heart of the shared control problem is the design of an autonomy protocol.
In the literature, there are two main directions, based on either \emph{switching} the control authority between human and autonomy protocol~\cite{shen2004collaborative}, or on \emph{blending} their commands towards joined inputs for the robot~\cite{dragan-et-al-policy-blending,jansen2017synthesis}. 
%switching

One approach to switching the authority first determines the desired goal of the human operator with high confidence, and then assists towards exactly this goal~\cite{fagg2004extracting,kofman2005teleoperation}.
In~\cite{DBLP:journals/tase/FuT16}, switching the control authority between the human and autonomy protocol ensures satisfaction of specifications that are formally expressed in temporal logic. 
In general, switching of authority may cause a decrease in human's satisfaction, who usually prefers to retain as much control as possible~\cite{kim2012autonomy}. 

%blending
Blending incorporates providing an alternative command in addition to the one of the human operator.
To introduce a more flexible trade-off between the human's control authority and the level of autonomous assistance,  both commands are then blended to form a joined input for the robot.
A \emph{blending function} determines the emphasis that is put on the autonomy protocol in the blending, that is, regulating the amount of assistance provided to the human.
Switching of authority can be seen as a special case of blending, as the blending function may assign full control to the autonomy protocol or to the human.
In general, putting more emphasis on the autonomy protocol in blending may lead to greater accuracy in accomplishing the task~\cite{dragan-et-al-assistive-teleoperation,dragan-et-al-policy-blending,leeper2012strategies}.
However, as humans prefer to retain control of the robot and may not approve if a robot issues a set of commands that is significantly different to the human's command~\cite{javdani2015shared,kim2012autonomy}.
None of the existing blending approaches provide \emph{formal correctness} guarantees that go beyond statistical confidence bounds.
Correctness here refers to ensuring safety and optimizing performance according to the given requirements.
Our goal is to design an autonomy protocol that admits formal correctness while rendering the robot behavior as close to the human's commands as possible, which is shown to enhance the human experience.
%To ensure safety while optimizing performance, we design the autonomy protocol such that it provides assistance during the whole task by making the resulting behavior of the robot as similar as the human's behavior instead of switching the authority. 
%Therefore, we can also comply to the human's preference during the interaction.

%\ufuksubsection{intro to our formulation + MDP + temp logic}
A human may be uncertain about which command to issue in order to  accomplish a task.
Moreover, a typical interface used to parse human's commands, such as a brain-computer interface, is inherently imperfect.
To capture such uncertainties and imperfections in the human's decisions, we introduce \emph{randomness} to the commands issued by humans.
%Moreover, a means to actually interpret a command is necessary, such as an input interface, or a brain-computer interface.
%Such interfaces typically suffer from imperfections, wich may also be captured by randomization.
% while interpreting the commands of the human. 
%We account for such imperfections by introducing an additional source of randomness of the human's decisions.
%It is easy to see that Such randomness ensures that the blending of two commands is always well-defined. 
It may not be possible to blend two different deterministic commands. 
If the human's command is ``up'' and the autonomy protocol's command is ``right'', we cannot blend these two commands to obtain another deterministic command.
By introducing randomness to the commands of the human and the autonomy protocol, we ensure that the blending is always well-defined.

Take as an example a scenario involving a semi-autonomous wheelchair~\cite{wheelchair-demillan} whose navigation has to account for a randomly moving autonomous vacuum cleaner, see Fig.~\ref{fig:gridworld}. 
The wheelchair needs to navigate to the exit of a room, and the vacuum cleaner moves in the room according to a probabilistic transition function.  
The task of the wheelchair is to reach the exit gate while not crashing into the vacuum cleaner. 
The human  may not fully perceive the motion of the vacuum cleaner. 
Note that the human's commands, depicted with the solid red line in Fig~\ref{fig:gridworld_human}, may cause the wheelchair to crash into the vacuum cleaner. 
The autonomy protocol provides another set of commands, which is indicated by the solid red line in Fig~\ref{fig:gridworld_full}, to carry out the task safely without crashing.
However, the autonomy protocol's commands deviate highly from the commands of the human. 
The two sets of commands are then blended into a new set of commands, depicted using the dashed red line in Fig~\ref{fig:gridworld_full}. 
The blended commands perform the task safely while generating behavior as similar to the behavior induced by the human's commands as possible.

%\lmc{changed the below paragraph}
We model the behavior of the robot as a Markov decision process (MDP)~\cite{puterman2014markov}, which captures the robot's actions inside a potentially stochastic environment.
%We assume that the vacuum cleaner moves in the grid according to a strategy which induces a probability distribution to successor states and these probabilities are unknown or incomprehensible to the human user. To improve the safety of the wheelchair, it is equipped with an autonomy protocol that is to improve decisions of the human or even overwrite them in case of safety hazards.
%
%Such actions in the MDP may cause certain rewards (or cost). 
%In particular, upon taking an action aregt a 
%
%
%where the action taken at a state induce probabilistic transitions over states. 
%
Problem formulations with MDPs typically focus on maximizing an expected reward (or, minimizing the expected cost).
%
%
%the expected cost or reward, and the objective in these formulations is to minimize the cost or maximize the reward.
However, such formulations may not be sufficient to ensure safety or performance guarantees in a task that includes a human operator. 
Recently, it was shown that a reward structure is not sufficient to capture temporal logic constraints in general~\cite{hahn2019omega}.
We design the autonomy protocol such that the resulting robot behavior satisfies probabilistic temporal logic specifications. Such verification problems have been extensively studied for MDPs~\cite{BK08} and mature tools exist for efficient verification~\cite{KNP11,dehnert2017storm}.

%\lmc{changed the below paragraph}

%\nj{I'm not familiar with how one mentions the original version of the paper. In fact, WE considered formal specs in our ACC paper. Do we mention this here?}
%\lmc{I do in the next paragraph}
%\ufuksubsection{blending}
%The execution as dictated by the human may not satisfy the specifications. 
%Therefore, the autonomy protocol provides an alternative command that is blended with the human's command according to a blending function and resulting blended command shall satisfy the task specifications. Also, blending helps to reduce the burden on the human operator during the task.
\newcommand{\gridScale}{1} %needs to be predefined here

\newcommand\mygrid[1]{
    \draw[black,line width=1pt] (0,0) grid[step=1] (#1,#1);
    \draw[black,line width=4pt] (0,0) rectangle (#1,#1);
}

\newcommand{\fillGridAt}[3]{
	\node [xshift=.5*\gridScale cm,yshift=.5*\gridScale cm] at (#1,#2){#3};	
}

\renewcommand{\gridScale}{1}
\newcommand{\robotScale}{0.03}
\newcommand{\flagScale}{0.1}
\newcommand{\broomScale}{0.05}

\begin{figure}[t]
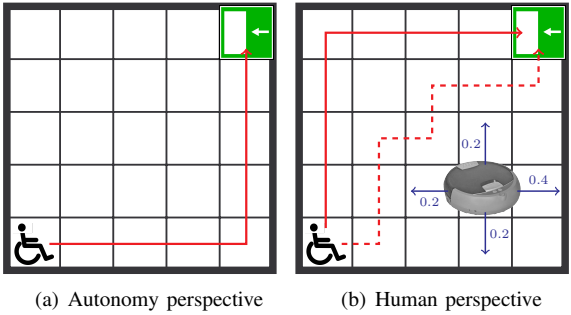

	\centering
		\subfigure[Autonomy perspective]
	{%
		\scalebox{0.7}{\input{wheelchair_human}}		
		\label{fig:gridworld_human}
	}%
	\subfigure[Human perspective]
	{%
		\scalebox{0.7}{\input{wheelchair_full}}
		\label{fig:gridworld_full}
	}%	

	\caption{A wheelchair in a shared control setting.}
	\label{fig:gridworld}
\end{figure}
%\ufuksubsection{blending and solution}
%
%command that is used as an input to the robot is well-defined after blending.
%As in our approach, randomness naturally prevents strange effects of blending: Consider actions ``up'' and ``down'' to be blended with equally distributed weight without having means to actually evaluating these weights. 

In what follows, we call a formal interpretation of a sequence of the human's commands the \emph{human strategy}, and the sequence of commands issued by the autonomy protocol the \emph{autonomy strategy}.
In \cite{jansen2017synthesis}, we formulated the problem of designing the autonomy protocol as a \emph{nonlinear programming problem}. 
However, solving nonlinear programs is generally intractable~\cite{bellare1993complexity}. 
Therefore, we proposed a greedy algorithm that iteratively \emph{repairs} the human strategy such that the specifications are satisfied without guaranteeing optimality, based on~\cite{pathak-et-al-nfm-2015}. 
Here, we propose an alternative approach for the blending of the two strategies. 
We follow the approach of repairing the strategy of the human to compute an autonomy protocol. We ensure that the resulting robot behavior induced by the repaired strategy deviates minimally from the human strategy, and satisfies safety and performance properties given in temporal logic specifications.
 We formally define the problem as a \emph{quasiconvex optimization problem}, which can be solved efficiently by checking feasibility of a number of convex optimization problems~\cite{boyd_convex_optimization}.

The question remains how to obtain the human strategy in the first place.
It may be unrealistic that a human can provide the strategy for an MDP that models a realistic scenario.
%, and the autonomy does not have access to the human strategy. 
To this end, we create a virtual simulation environment that captures the behavior of the MDP. 
We ask humans to participate in two case studies to collect data about typical human behavior. 
We use \emph{inverse reinforcement learning} to get a formal interpretation as a strategy based on human's inputs~\cite{abbeel2004apprenticeship,ziebart2008maximum}.
We model a typical shared control scenario based on an autonomous wheelchair navigation~\cite{wheelchair-demillan} in our first case study. 
In our second case study, we consider an unmanned aerial vehicle mission planning scenario, where the human operator is to patrol certain regions while keeping away from enemy aerial vehicles.

In summary, the main contribution this paper is to efficiently synthesize an autonomy protocol such that the resulting blended or repaired strategy meets all given specifications while only minimally deviating from the human strategy. We present a new technique based on quasiconvex programming, which can be solved efficiently using convex optimization~\cite{boyd_convex_optimization}.

\textbf{Organization.} We introduce all formal foundations that we need in Section~\ref{sec:preliminaries}. We provide an overview of the general shared control concept in Section~\ref{sec:overview}. 
We present the \emph{shared control synthesis problem} and provide a solution based on convex optimization in Section~\ref{sec:synthesissec}. 
%We propose two computationally efficient approaches by reducing the problem to a linear optimization problem and using convex-concave programming in Section~\ref{sec:solution}. 
We indicate the applicability and scalability of our approach on experiments in Section~\ref{sec:simulation} and draw a conclusion and critique of our approach in Section~\ref{sec:conclusion}.

\section{Preliminaries}\label{sec:preliminaries}
In this section, we introduce the required formal models and specifications that we use to synthesize the autonomy protocol, and we give a short example illustrating the main concepts.

\subsection{Markov Decision Processes}
%For a set $X$, let $2^{X}$ denote the power set of $X$. 
A \emph{probability distribution} over a finite set $\distDom$ is a function $\distFunc\colon\distDom\rightarrow\Ireal$ with $\sum_{\distDomElem\in\distDom}\distFunc(\distDomElem)=\distFunc(\distDom)=1$. 
%We assume all probabilities to be from $\Q$.
The set $\distDom$ of all distributions is $\Distr(\distDom)$. 
%The set $\supp(\distFunc)=\{x\in\distDom \mid \distFunc(x)>0\}$ is the \emph{support} of $\distFunc \in \Distr(\distDom)$. 
%If $\mu(x)=1$ for $x\in\distDom$ and $\mu(y)=0$ for all $y\in\distDom\setminus\{x\}$, $\mu$ is called a \emph{Dirac distribution}.

\begin{definition}[MDP]
%\nj{we could consider to significantly shorten this section. I wrote it for a more formal community :).}
A \emph{Markov decision process (MDP)} $\MdpInit$ has a finite set $S$ of states, an initial state $\sinit \in S$, a finite set $\Act$ of actions, a transition probability function $\probmdp\colon S\times\Act\rightarrow\Distr(S)$, a finite set $\Ap$ of atomic propositions, and a labeling function $\Label \colon S \rightarrow 2^{\Ap}$ that labels each state $s \in S$ with a subset of atomic propositions $\Label (s) \subseteq \Ap$. We extend $\Label$ to a sequence of states by $\Label(s_0 s_1\ldots)=\Label(s_0)\Label(s_1)\ldots$ for $s_0,s_1 \in S$. 
\end{definition}
MDPs have \emph{nondeterministic choices} of actions at the states; the successors are determined \emph{probabilistically} via the associated probability distribution.
We assume that the MDP contains no deadlock states, that is, at every state at least one action is available.
%The set of successor states for a state action pair $s,\act$ is defined as $Succ(s,a):=\lbrace{ s' \in S | \pmdp(s,\act,s')>0\rbrace}.$
%The \emph{enabled} actions at state $s\in S$ are denoted by $\Act(s)=\{\act\in\Act\mid\exists\mu\in\Distr(S).\,\mu=\probmdp(s,\act)\}$.
%We assume that each MDP contains no deadlock states.
A \emph{cost function} $C\colon S\times\Act\rightarrow\R_{\geq 0}$ associates a cost to state-action pairs.
% adds cost to a \emph{transition} $(s,\act)\in S\times\Act$ with $\act\in\Act(s)$.  
%
%\lmc{do we need path? \nj{no}}
%A \emph{path} in an MDP is a finite (or infinite) sequence $\pi=s_0\act_0s_1\act_1\ldots$ with $\probmdp(s_i, \act, s_{i+1}) > 0$ for all $i\geq 0$. 
%The set of all paths in $\mdp$ is denoted by $\pathset^\mdp$, all paths starting in state $s\in S$ by $\pathset^\mdp(s)$. 
%The cost of finite path $\pi$ is the sum of costs of all transitions in $\pi$: $\rho(\pi) = \sum_{i=0}^{n-1} \rho(s_i,a_i)$ where $n$ is the number of transitions in $\pi$.
%
If there is only a single action available at each state, the MDP reduces to a
%$|\Act(s)|=1$ for all $s\in S$, all actions can be disregarded, and the MDP $\mdp$ reduces to a 
\emph{discrete-time Markov chain (MC)}.
%, sometimes denoted by $\dtmc$, yielding a transition probability transition function of the form $\pmdp\colon S\rightarrow\Distr(S)$.
%%
%	A probability measure $\pr$ for finite paths of an MC $\dtmc$ is given by the product of transition probabilities:
%	\begin{align*}
%		\pr^\dtmc(s_0,s_1,\ldots,s_n)=\prod_{i=0}^{n-1}\pmdp(s_i)(s_{i+1})
%	\end{align*}
%
%The \emph{unique probability measure} $\pr^\dtmc(\Pi)$ for a set $\Pi$ of paths of MC $\dtmc$ can be defined by the usual cylinder set construction, the \emph{expected cost} of a set $\Pi$ of paths is denoted by $\expRew^\dtmc(\Pi)$,
%, is defined as $\sum_{\pi \in \Pi} \pr(\pi){\cdot}\rho(\pi)$. 
% see~\cite{BK08} for details.
We use \emph{strategies} to resolve the choices of actions in order to define a probability and expected cost measure for MDPs.
%For practical reasons, we restrict ourselves to \emph{memoryless} strategies.
%
\begin{definition}[Strategy]\label{def:scheduler}
	A \emph{memoryless and randomized strategy} for an MDP $\mdp$ is a function $\sched\colon S\rightarrow\Distr(\Act)$.
%	 such that $\sigma(s,\act) > 0$ implies $\act \in \Act(s)$. 
%If $\sched(s,\act)=1$ for $\act\in\Act$ and $\sched(s,\beta)=0$ for all $\beta\in\Act\setminus\{\act\}$, the strategy is \emph{deterministic}. 
The set of all strategies over $\mdp$ is $\Sched^\mdp$.
\end{definition}
A special case are deterministic strategies which are functions of the form $\sched\colon S\rightarrow\Act$ with $\sigma(s) \in \Act(s)$.
Resolving all the nondeterminism for an MDP $\mdp$ with a strategy $\sched\in\Sched^\mdp$ yields an \emph{induced Markov chain} $\mdp^\sched$. 

%Intuitively, the random choices of actions from $\sched$ are transferred to the transition probabilities in $\mdp^\sched$.
\begin{definition}[Induced MC]\label{def:induced_dtmc}
	For an  MDP $\MdpInit$ and strategy $\sched\in\Sched^{\mdp}$, the \emph{MC induced by $\mdp$ and $\sched$} is  $\mdp^\sched=(S,\sinit,\Act,\probmdp^\sched)$, where
	\begin{align*}
		\probmdp^\sched(s,s')=\sum_{\act\in\Act(s)} \sched(s,\act)\cdot\probmdp(s,\act,s') \mbox{ for all } s,s'\in S.
	\end{align*} 
\end{definition}

In following, we assume that for a given MDP $\mathcal{M}$ and for any state $s$ $\in$ $S$, there exists a strategy $\sched$ that induces a MC $\mathcal{M}^{\sched}$ that ensures state $s$ is reachable under that strategy. Note that this is a standard assumption for MDPs, and we can remove the unreachable states by doing a graph search over the MDP $\mathcal{M}$ as a  preprocessing step~\cite{BK08}.

A finite or infinite sequence $\varrho^{\sched} = s_0s_1s_2\ldots$ of states generated in $\mathcal{M}$ under a strategy $\sched$ $\in$ $\Sched^{\mathcal{M}}$ is called a \textit{path}. Given an induced MC $\mathcal{M}^{\sched}$, starting from the initial state $s_0$, the state visited at step $t$ is given by a random variable $X_t$. The probability of reaching state $s'$ from state $s$ in one step, denoted $\mathbb{P}(X_{t+1}=s' | X_t = s)$ is equal to $\mathcal{P}^{\sched}(s,s')$. We can extend one-step reachability over a set of paths $\varrho^{\sched}$, i.e., $\mathbb{P}(s_{0} s_{1} s_{2} \ldots s_{n} )=\mathbb{P}(X_{n} = s_{n} | X_{n-1} = s_{n-1} )\cdot \mathbb{P}(s_{0} s_{1} s_{2} \ldots s_{n-1})$. We denote the set of all paths in $\mathcal{M}$ under the strategy $\sched$ by $Path^{\sched}(\mathcal{M})$.

%We use the standard probability measure over the outcome set $Paths^{\pi}(\mathcal{M})$ \cite{BK08}. For a path $\varrho^{\pi}$ $\in$ $Paths^{\pi}(\mathcal{M})$, let the sequence $s_0s_1\ldots s_n$ be the finite path fragment of legth $n$, and let $Paths^{\pi}(\mathcal{M})(s_0s_1\ldots s_n)$ denote the set of all paths in $Paths^{\pi}(\mathcal{M})$ starting with the prefix $s_0s_1\ldots s_n$. The probability measure $\text{Pr}_{\mathcal{M}}^{\pi}$ defined on the smallest $\sigma$-algebra over $Paths^{\pi}(\mathcal{M})$ that contains $Paths^{\pi}(\mathcal{M})(s_0s_1\ldots s_n)$ for all $s_0s_1\ldots s_n$ $\in$ $Paths_{fin}^{\pi}(\mathcal{M})$ is the unique measure that satisfies 
%
%\begin{align}
%\label{first_measure}
%\text{Pr}_{\mathcal{M}}^{\pi}\{Paths^{\pi}&(\mathcal{M})(s_0\ldots s_n)\}=\prod_{0\leq k < n} \sum_{a_k\in\mathcal{A}(s_k)}\mu_k(s_k)(a_k) \mathbb{P}_{s_k,a_k, s_{k+1}}.
%\end{align}
%$

%nj{write motivation sentence for occupancy measure}
\begin{definition}[Occupancy Measure]
The occupancy measure $x_\sched$ of a strategy $\sched$ for an MDP $\mdp$ is defined as 

\begin{align}
\displaystyle x_{\sched}(s,\act)=\mathbb{E}\left[ \sum_{t=0}^{\infty}P(\act_t=\act \vert s_t=s)\right],
\end{align}
where $s_t$ and $\act_t$ denote the state and action in $\mdp$ at time step $t$. The occupancy measure $x_{\sched}(s,\act)$ is the expected number of times to take action $\act$ at state $s$ under the strategy $\sched$.
\end{definition}
%The occupancy measure of a strategy $\sched$ gives the expected number of taking an action at a state in an MDP.
 In our solution approach, we use the occupancy measure of a strategy to compute an autonomy protocol.
%\nj{check grammar, also never use sth like ``in other words'' within a definition.}
%As there is no nondeterministic choice, we can write the MC as $\dtmc^\sched=(S,\sinit,\probmdp^\delta)$ with $\probmdp^\sched\colon S\rightarrow\Distr(S)$.
%
%
%When reasProbabilities and expected cost on MDPs are computed based on specific strategies on the induced MC.
%
%\begin{remark}\label{rem:det_schedulers}
%\emph{Deterministic strategies} pick just one action at each state and the associated probability distribution determines the probabilities. In this case we write for all states $s\in S$ and $\act\in\Act$ with $\sched(s)(a)=1$:
%\begin{align*}
%	\pmdp^\sched(s,s')=\pmdp(s,a)(s')\ .
%\end{align*}
%\end{remark}
%
%This is used to define a probability measure $\pr^{\mdp^\sched}$ \wrt specific strategies $\sched$ for $\mdp$.
\subsection{Specifications}

We use linear temporal logic (LTL) to specify a set of tasks~\cite{BK08}. A specification in LTL is built from a set $\Ap$ of atomic propositions, $\texttt{true},\texttt{false}$ and the Boolean and temporal connectives $\wedge,\vee, \neg, \Rightarrow, \Leftrightarrow$, and $\square$ (always), $\pctlUntil$ (until), $\eventually$ (eventually), and $\bigcirc $ (next). An infinite sequence of subsets of $\Ap$ defines an infinite word, and an LTL specification is interpreted over infinite words on $2^{\Ap}$. If a word $w=w_0 w_1 w_2\ldots$ satisfies an LTL specification $\varphi$, we denote it by $w \models \varphi$. 

\begin{definition}(DRA)
A deterministic Rabin automaton (DRA) is a tuple $\DraInit$, with a finite set $Q$ of states, an initial state $\qinit \in Q$, the alphabet $\Sigma$, the transition relation  $\mu \colon Q \times \Sigma \rightarrow Q$ between states of a DRA, and the set of accepting state pairs $\Acc \subseteq 2^{Q} \times 2^{Q}$.
\end{definition}

A run of a DRA $\mathcal{A}$, denoted by $\gamma=q_0 q_1 q_2\ldots$, is an infinite sequence of states. For each $i\geq 0$, $q_{i+1}=\mu(s_i,v_i)$ for some $v_i \in \Sigma$. A run $\gamma$ is accepting if there exists a pair $(A,B) \in \Acc$ and $n\geq 0$ such that, for all $m\geq n$, we have $q_m \notin A$ and there exists infinitely many $k$ that satisfies $q_k \in B.$ Given an LTL specification $\varphi$ with atomic propositions $\Ap$, a DRA $\mathcal{A}_{\varphi}$ can be constructed with alphabet $2^{\mathcal{AP}}$ that accepts all words that satisfy the LTL specification $\varphi$~\cite{BK08}.

For an induced DTMC $\mathcal{M^{\sched}}$ of an MDP and a strategy $\sched$, a path $\varrho^{\sched}$ $=$ $s_0s_1\ldots$ generates a word $w$ $=$ $w_0w_1\ldots$ where $w_k$ $=$ $\mathcal{L}(s_k)$ for all $k \geq 0$. We denote the word that is generated by $\varrho^{\sched}$ as $\mathcal{L}(\varrho^{\sched})$. For an LTL specification $\varphi$, the set of words that is accepted by the DRA and satisfies the LTL specification $\varphi$ is given by $\{\varrho^{\sched}\in Path^{\sched}(\mathcal{M})\colon \mathcal{L}(\varrho^{\sched})\models\varphi\}$ , and is measurable \cite{BK08}. We define
\begin{align*}
\mathbb{P}_{\mathcal{M}^{\sched}}(\sinit \models \varphi)=\mathbb{P}_{\mathcal{M}^{\sched}}\lbrace \varrho^{\sched}\in Path^{\sched}(\mathcal{M}) \colon \mathcal{L}(\varrho^{\sched})\models \varphi\rbrace
\end{align*}
as the probability of satisfying the LTL specification $\varphi$ for an MDP $\mathcal{M}$ under the strategy $\sched \in \Sched^{\mathcal{M}}$.

The \emph{synthesis problem} is to find one particular strategy $\sched$ for an MDP $\mdp$ such that given an LTL specification $\varphi$ and a threshold $\beta\in[0,1]$, the induced DTMC $\mathcal{M^{\sched}}$ satisfies
\begin{align}\label{eq:speccons}
\mathbb{P}_{\mathcal{M}^{\sched}}(\sinit \models \varphi)\geq \beta,
\end{align}
which implies that the strategy $\sched$ satisfies the specification $\varphi$ with at least a probability of $\beta$.

%We also consider additional specifications described as reachability and expected cost specifications. A reachability specification $\varphi_r=\reachProplT$ with the threshold $\lambda\in\Irat$ and the set of target states $T\subseteq S$ puts an lower bound $\lambda$ on the probability to reach $T$ from $\sinit$ in $\mdp$.
We also consider expected cost properties $\varphi_c=\expRewProp{\kappa}{G}$, that restricts the expected cost to reach the set $G\subseteq S$ of goal states by an upper bound $\kappa\in\Q$. 
\begin{figure}[t]
	\centering
	\subfigure[MDP $\mdp$]
	{
		\scalebox{0.7}{\begin{tikzpicture}[scale=1, nodestyle/.style={draw,circle},baseline=(s0)]

    \node [nodestyle] (s0) at (0,0) {$s_0$};
    \node [nodestyle] (s1) [on grid, right=2.5cm of s0] {$s_1$};
    \node [nodestyle, accepting] (s2) [on grid, right=2.5cm of s1] {$s_2$};
    \node [nodestyle] (s3) [on grid, above=2.5cm of s1] {$s_3$};
    \node [nodestyle] (s4) [on grid, above=2.5cm of s2] {$s_4$};
    
    \node [circle, draw, scale=0.5, fill=black] (s0a) [right=.7cm of s0] {};
    \node [circle, draw, scale=0.5, fill=black] (s0b) [above=0.8cm of s0a] {};
    \node [circle, draw, scale=0.5, fill=black] (s1a) [right=.7cm of s1] {};
    \node [circle, draw, scale=0.5, fill=black] (s1b) [above=0.8cm of s1a] {};

    \draw [thick] (s0) -- node[above] {$a$} (s0a);
    \draw [thick] (s0) -- node[above] {$b$} (s0b);
    \draw [thick] (s1) -- node[above] {$c$} (s1a);
    \draw [thick] (s1) -- node[above] {$d$} (s1b);
    
    \draw[->] (s0a) -- node [auto] {\scriptsize$0.6$} (s1);
    \draw[->] (s0a) -- node [right, near end] {\scriptsize$0.4$} (s3);
    \draw[->] (s0b) -- node [auto] {\scriptsize$0.4$} (s1);
    \draw[->] (s0b) -- node [auto] {\scriptsize$0.6$} (s3);
    
    \draw[->] (s1a) -- node [auto] {\scriptsize$0.6$} (s2);
    \draw[->] (s1a) -- node [right, near end] {\scriptsize$0.4$} (s4);
    \draw[->] (s1b) -- node [auto] {\scriptsize$0.4$} (s2);
    \draw[->] (s1b) -- node [auto] {\scriptsize$0.6$} (s4);
    
    \draw(s4) edge[loop above] node [above] {\scriptsize$1$} (s4);
    \draw(s3) edge[loop above] node [above] {\scriptsize$1$} (s3);
    \draw(s2) edge[loop above] node [above] {\scriptsize$1$} (s2);    
    
   \end{tikzpicture}}\label{fig:mdp}
	}
	\subfigure[Induced MC $\mdp^{\sched_{unif}}$]
	{
		\scalebox{0.7}{\begin{tikzpicture}[scale=1, nodestyle/.style={draw,circle},baseline=(s0)]

    \node [nodestyle] (s0) at (0,0) {$s_0$};
    \node [nodestyle] (s1) [on grid, right=2cm of s0] {$s_1$};
    \node [nodestyle, accepting] (s2) [on grid, right=2cm of s1] {$s_2$};
    \node [nodestyle] (s3) [on grid, above=2.5cm of s1] {$s_3$};
    \node [nodestyle] (s4) [on grid, above=2.5cm of s2] {$s_4$};
    
%    \node [circle, draw, scale=0.5, fill=black] (s0a) [right=.7cm of s0] {};
%    \node [circle, draw, scale=0.5, fill=black] (s0b) [above=0.8cm of s0a] {};
%    \node [circle, draw, scale=0.5, fill=black] (s1a) [right=.7cm of s1] {};
%    \node [circle, draw, scale=0.5, fill=black] (s1b) [above=0.8cm of s1a] {};

%    \draw [thick] (s0) -- node[above] {$a$} (s0a);
%    \draw [thick] (s0) -- node[above] {$b$} (s0b);
%    \draw [thick] (s1) -- node[above] {$c$} (s1a);
%    \draw [thick] (s1) -- node[above] {$d$} (s1b);
    
    \draw[->] (s0) -- node [auto] {\scriptsize$0.5$} (s1);
    \draw[->] (s0) -- node [right, near end] {\scriptsize$0.5$} (s3);
%    \draw[->] (s0b) -- node [auto] {\scriptsize$0.4$} (s1);
%    \draw[->] (s0b) -- node [auto] {\scriptsize$0.6$} (s3);
    
    \draw[->] (s1) -- node [auto] {\scriptsize$0.5$} (s2);
    \draw[->] (s1) -- node [right, near end] {\scriptsize$0.5$} (s4);
%    \draw[->] (s1b) -- node [auto] {\scriptsize$0.4$} (s2);
%    \draw[->] (s1b) -- node [auto] {\scriptsize$0.6$} (s4);
    
    \draw(s4) edge[loop above] node [above] {\scriptsize$1$} (s4);
    \draw(s3) edge[loop above] node [above] {\scriptsize$1$} (s3);
    \draw(s2) edge[loop above] node [above] {\scriptsize$1$} (s2);    
    
   \end{tikzpicture}}\label{fig:induced_dtmc1}
	}
%	\subfigure[Induced MC $\mdp^{\sched_2}$]
%	{
%		\input{pics/induced_dtmc2}
%		\label{fig:induced_dtmc2}
%	}
%	\subfigure[Induced MC $\mdp^{\sched_{\textit{safe}}}$]
%	{
%		\input{pics/induced_dtmc3}
%		\label{fig:induced_dtmc3}
%	}
	\caption{MDP $\mdp$ with target state $s_2$ and induced MC for strategy $\sched_{\textit{unif}}$}
	\label{fig:mdp_example}
\end{figure}
\begin{example}\label{ex:simple_mdp}
Fig.~\ref{fig:mdp} depicts an MDP $\mdp$ with initial state $s_0$. 
In state $s_0$, the available actions are $a$ and $b$. Similarly for state $s_1$, the two available actions are $c$ and $d$.
If action $a$ is selected in state $s_0$, the agent transitions to $s_1$ and $s_3$ with probabilities $0.6$ and $0.4$. 
For states $s_2,s_3$ and $s_4$ we omit actions, because of the self loops.

For a safety specification $\phi=\reachProp{0.21}{s_2}$, the deterministic strategy $\sched_{1}\in\Sched^\mdp$ with $\sched_{1}(s_0,a)=1$ and $\sched_{1}(s_1,c)=1$ induces a probability of $0.36$ to reach $s_2$.
Therefore, the specification is not satisfied, see the induced MC in Fig.~\ref{fig:induced_dtmc1}. 
Likewise, the randomized strategy $\sched_{\textit{unif}}\in\Sched^\mdp$ with $\sched_{\textit{unif}}(s_0,a)=\sched_{\textit{unif}}(s_0,b)=0.5$ and $\sched_{\textit{unif}}(s_1,c)=\sched_{\textit{unif}}(s_1,d)=0.5$ violates the specification, as the probability of reaching $s_2$ is $0.25$.
However, the deterministic strategy $\sched_{\textit{safe}}\in\Sched^\mdp$ with $\sched_{\textit{safe}}(s_0,b)=1$ and $\sched_{\textit{safe}}(s_1,d)=1$ induces a probability of $0.16$, thus $\sched_{\textit{safe}}\models\phi$. 
%Note that $\sched_{\textit{safe}}$ minimizes the probability of reaching $s_2$ while $\sched_1$ maximizes this probability.
\end{example}

\subsection{Strategy synthesis in an MDP}\label{product_section}
Given an MDP, and an LTL specification $\varphi$, we aim to synthesize a strategy that satisfies $\varphi$, or equivalently, a strategy that satisfies the condition in~\eqref{eq:speccons}. 
%We present the preliminaries of the strategy synthesis for an MDP with LTL specifications, following~\cite{BK08}.

\begin{definition} (Product MDP)
Let $\MdpInit$ be an MDP and $\DraInit$ be a DRA. 
The product MDP is a tuple $\ProductInit$ with a finite set $S_p=S \times Q$ of states, an initial state $\sinitp=(\sinit,q)\in S_p$ that satisfies $q=\delta(\qinit,\mathcal{L}(\sinit))$, a finite set $\Act$ of actions, a transition probability function $\probmdpp((s,q), a, (s',q'))= \probmdp(s,a,s')\;  \text{if} \quad q'=\delta(q,\mathcal{L}(s'))$, and  $\probmdpp((s,q), a, (s',q'))=0 \;  \text{if} \quad q'\neq\delta(q,\mathcal{L}(s'))$, a labeling function $\mathcal{L}_p((s,q))=\{q\}$, and the acceptance condition
$\Accp=\{(A_1^p,B_1^p),\ldots,(A_k^p,B_k^p) \}$ where $A_i^p=S\times A_i$ and $B_i^p=S\times B_i$ for all $(A_i,B_i)\in \Acc$ and $i=1,\ldots, k$.
\end{definition}

 \begin{definition}(AEC)
 The end component for the product MDP $\mdpp$ is given by a pair $(C,D)$, where a non-empty set $C\subseteq S_p$ of states and a function $D \colon C \rightarrow \Act$ is defined such that for any $s \in C$ we have
 $$\displaystyle \sum_{s' \in C}\probmdpp(s,D(s),s')=1,$$ 
 and the induced directed graph is strongly connected. An accepting end component (AEC) is an end component that satisfies $C \cap A^p_i=\emptyset$ and $C\cap B^p_i\neq \emptyset$ for some $i \in{1,\ldots,k}.$
\end{definition}

Given a product MDP $\mdpp$, we modify it to $\mdpp'$ by making all states in the end components absorbing, i.e., for all states $s \in C$, $\probmdpp(s,\act,s)=1$ for all $\act \in \Act$ in the modified MDP $\mdpp'$. 
Making all end components absorbing is commonly used in tools for model checking of LTL specifications in MDPs~\cite{BK08,KNP11,dehnert2017storm}.
We further assume that all states in the end component are absorbing.
The modification does not change the probability of satisfying an LTL specification as stated below. 

\begin{lemma}(From~\cite{de1997formal}) In each end component of an MDP, there exists a strategy in each state $s \in C$ that reaches any other state $s' \in C$ with a probability of 1.
\end{lemma}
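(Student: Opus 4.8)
The statement to prove is that in each end component $(C,D)$ of an MDP, there exists a strategy that, from any state $s \in C$, reaches any other state $s' \in C$ with probability $1$. Let me think about what structure I have to work with and how I would argue this.

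The plan is to exhibit one memoryless deterministic strategy---namely the action function $D$ of the end component itself---and to show that under it every target state is almost surely reached from every source state. First I would set $\sched(s) = D(s)$ for all $s \in C$. By the defining property $\sum_{s' \in C}\probmdpp(s,D(s),s')=1$, following $\sched$ never leaves $C$, so the induced process is a (time-homogeneous) Markov chain on the finite set $C$. Fix an arbitrary target $s' \in C$; the goal is to prove that $s'$ is visited with probability $1$ from every starting state of $C$.

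The second step is to convert strong connectivity into a quantitative reachability bound. Since the directed graph induced by $D$ on $C$ is strongly connected, from every state $u \in C$ there is a finite directed path to $s'$, and such a path can be chosen of length at most $|C|-1$. Each edge $u \to v$ of this graph corresponds to a transition with $\probmdpp(u,D(u),v) > 0$. Letting $p_{\min}$ denote the smallest positive transition probability occurring in the induced chain, the probability of reaching $s'$ within $|C|-1$ steps from \emph{any} state of $C$ is then bounded below by $\delta := p_{\min}^{\,|C|-1} > 0$, uniformly in the starting state.

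The final step is the limiting argument. Partition time into consecutive blocks of length $|C|-1$, and write $A_k$ for the event of avoiding $s'$ throughout the first $k$ blocks. Conditioned on not having visited $s'$ by the start of block $k$, the process sits at some state of $C$, so by the uniform bound it visits $s'$ during block $k$ with probability at least $\delta$. The Markov property then yields $\mathbb{P}(A_k) \le (1-\delta)\,\mathbb{P}(A_{k-1}) \le (1-\delta)^k \to 0$. The event of never reaching $s'$ is contained in every $A_k$, hence has probability $0$, so $s'$ is reached almost surely. As $s'$ was arbitrary, the single strategy $\sched$ reaches each state of $C$ from each state of $C$ with probability $1$.

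I expect the main obstacle to be phrasing the second step cleanly rather than the underlying idea: one must justify that the lower bound $\delta$ holds simultaneously across all starting states and all blocks, which relies on finiteness of $C$ (to cap the path length at $|C|-1$) together with time-homogeneity of the induced chain (so the per-block bound does not degrade over time). Once that uniform bound is in hand, the geometric-decay estimate delivers almost-sure reachability routinely; since the statement is classical and cited, I would keep this portion brief.
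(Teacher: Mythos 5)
Your proposal is correct. The paper gives no proof of this lemma at all---it is quoted verbatim from de Alfaro's thesis~\cite{de1997formal}---and your argument is precisely the standard one underlying that citation: play the end component's own action function $D$ (so the process never leaves $C$), use strong connectivity of the induced graph together with finiteness of $C$ to get a uniform lower bound $p_{\min}^{|C|-1}>0$ on reaching any fixed target within $|C|-1$ steps from every state of $C$, and conclude by geometric decay over successive blocks that the avoidance probability is zero.
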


A memoryless and randomized strategy for a product MDP $\mdpp$ is a function $\sched^p \colon S_p \rightarrow \Distr(\Act)$. A memoryless strategy $\sched^p$ is a finite-memory strategy $\sched'$ in the underlying MDP $\mdp$. Given a state $(s,q) \in S_p$, we consider $q$ to be a memory state and define $\sched'(\gamma)=\sched^p(s,q)$, where the run $\gamma=q_0 q_1 q_2\ldots q_n$ satisfies $q_n=q$ and $\Sigma(\qinit,L(\rho))=s.$ For the MDPs given in Definition 1 and LTL specifications, memoryless strategy in the product MDP $\mdpp$ are sufficient to achieve the maximal probability of satisfying the specification~\cite{BK08}.

Some states in the product MDP $\mdpp$ may be unreachable from the initial state $\sinitp$. These states do not affect the strategy synthesis in $\mdpp$, and can be removed from $\mdpp$. We assume that there is no unreachable states in the product MDP $\mdpp$. 
 
Let $\sched^p\in\Sched^{\mdpp}$ be a strategy for $\mdpp$ and let $\sched'\in\sched^{\mdp}$ be the strategy on $\mdp$ constructed from $\sched^p$ through the procedure explained above. 
The paths of the MDP $\mdp$ under the strategy $\sched'$ satisfy the LTL specification $\varphi$ with a probability of at least $\beta$, i.e., $\mathbb{P}_{\mathcal{M}^{\sched}}(\sinit \models \varphi)\geq \beta$, if and only if the paths of the induced DTMC $\mdpp^{\sched}$ from the product MDP $\mdpp$ under the strategy $\sched^p$ reach and stay in some AECs in $\mdpp$ with a probability of at least $\beta$~\cite{BK08}.

\section{Conceptual description of shared control}\label{sec:overview}
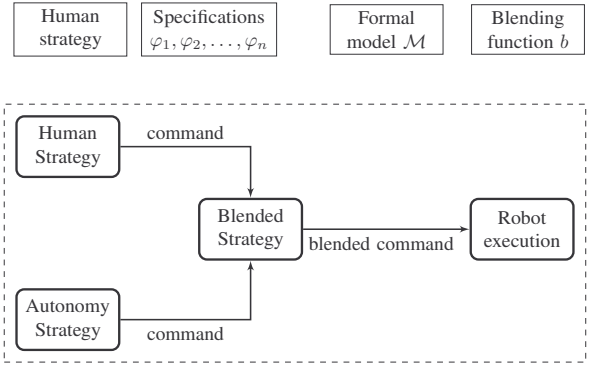
\begin{figure}[t]
\scalebox{0.73}{
	\centering%\tikzexternaldisable
\begin{tikzpicture}
\tikzstyle{outer}= [draw, text centered, shape=rectangle, text width=1.8cm]
\tikzstyle{inner}=[draw, text centered, shape=rectangle, rounded corners, text width=1.5cm, minimum height=1.1cm, inner sep=5pt]			
%inner nodes
\node[inner, very thick, text width=1.5cm] (human) {Human\\ Strategy};

%\node[inner, below=3cm of human, very thick] (autonomy) {Compute strategy $\sigma_{a}$ for $\blendFunc,\sigma_h$, and $\varphi_1,\ldots,\varphi_n$};

\node[inner, below=2cm of human, very thick, text width=1.5cm] (autonomy) {Autonomy\\ Strategy};

%\node[inner, right=2cm of human,yshift=-2.5cm, gray] (check) {Compute blending function $a\in[0,1]$ for $\varphi_1,\ldots,\varphi_n$};

\node[inner, right=1.4cm of human,yshift=-1.5cm, very thick, text width=1.5cm] (blending) {Blended\\ Strategy};

%\node[inner, below=2.5cm of autonomy, gray] (newBlending1) {Compute suitable blending function $g_a$ and strategy $\sched_a$};

%\node[inner, right=1cm of blending, yshift=2cm, text width=2cm, dashed] (shield) {Shield};

\node[inner, right=3cm of blending, very thick, text width=1.5cm] (execution) {Robot\\ execution};

%boxes
%\node (box) [draw, dashed, fit = (human) (autonomy) (execution), inner sep=0.5cm] {};
%\node (generalBlending) [draw, dashed, fit = (newBlending1) (newBlending2), inner sep=0.5cm, label={[label distance=-0.6cm,xshift=6cm]100:{\textbf{\color{gray}{Generalized blending}\color{black}}}}] {};

\draw[thick,-latex'] (human.east) -| node[above, near start] {command} (blending);

%\draw (human.south) edge[-latex', thick] node[auto] {command} (autonomy);

\draw[thick,-latex'] (autonomy.east)  -|  node[below, near start] (autonomyBlending) {command} (blending);

%\draw (check) edge[-latex', thick, gray] node[auto] {$a\colon S\rightarrow[0,1]$} (blending);

%\draw (check) edge[-latex', thick, gray] node[auto] {not possible} (nlblending);

\draw (blending) edge[-latex', thick] node[below, text width=2.8cm] {blended command} (execution);

%\draw (blending) edge[-latex', thick, dashed] node[right=0.5cm, text width=2cm, near start] {Strategy $\sched_{ah}$} (shield.west);

%\draw (shield.east) edge[-latex', thick, dashed] node[auto, text width=2cm] {} (execution);

%\node (autonomyBox) [draw, dashed, fit = (autonomy) (autonomyBlending), inner sep=0.4cm, label={[label distance=-0.5cm,xshift=3.3cm]100:{\textbf{Autonomy protocol}}}] {};

%\draw (autonomy) edge[-latex', thick, gray] node[right, gray] {Infeasible} (newBlending1);

%\draw (newBlending1) edge[-latex', thick, gray] node[below, text width=3cm] {Strategy $\sched_a$, blending function $a'$} (newBlending2);

%\draw[thick,-latex',gray] (newBlending2.east)  -|  node[below, near start, gray] {Strategy $\sched_{ah}$} (execution);

%\draw[thick,-latex'] (execution.north) -- +(0,2.1) -| node [below, near start] {} coordinate (l1) (human.north);
%
%\draw[thick,-latex'] (execution.south) -- +(0,-2.1) -| node [below, near start]  {} coordinate (l2) (autonomy.south);
%
%\draw[thick,-latex'] (execution.south) -- +(4,-2.1) -| node [below, near start]  {} coordinate (l2) (autonomy.south);
%\draw[thick,-latex'] (execution.south) -- +(4,2.1) -| node [below, near start]  {} coordinate (l2) (autonomy.south);

%\draw [dashed, inner sep=.4cm] (current bounding box.south west) rectangle (current bounding box.north east);
%

\node (box) [draw, dashed, fit = (current bounding box), inner sep=0.2cm] {};

%%inputs
\node (input1) [outer,above=2.6cm of blending,xshift=5cm] {Blending function $\blendFunc$};
\node (input2) [outer,left=.5cm of input1] {Formal model $\mdp$};
\node (input3) [outer,left=3.5cm of input1, text width = 2.2 cm] {Specifications $\varphi_1,\varphi_2,\ldots,\varphi_n$};
\node (input4) [outer,left=6.2cm of input1] {Human\\ strategy};

%\node (input4) [outer,left=7cm of input1] {};

%			
%%arrows
%\draw[-latex', thick] (safety.south)--(safety|-box.north);
%\draw[-latex', thick]  (objective.south)--(objective|-box.north);
%\draw[-latex', thick] (input1.south)--(input1|-box.north);
%\draw[-latex', thick] (input2.south)--(input2|-box.north);
%\draw[-latex', thick] (input3.south)--(input3|-box.north);
%\draw[-latex', thick] (input4.south)--(input4|-box.north);

\end{tikzpicture}
}
\caption{Shared control architecture.}
\label{fig:formal_setting}
\end{figure}
%
%\nj{I fail to understand the argument. Human cannot comprehend, so we need a representation of human behavior? Shouldn't mention that we learn inside a comprehensible setting, like a high-fidelity simulation of the problem?}
%\subsection{how do we learn}
%
%\nj{are we sure the following paragraph should be part of the intro?\lmc{how about making a separate section like this?}}
\noindent 
%\lmc{changed this section}
We now detail the general shared control concept adopted in this paper and state the formal problem.  
Consider the setting in Fig.~\ref{fig:formal_setting}.
As inputs, we have a set of task specifications, a model $\mdp$ for the robot behavior, and a blending function $b$.
%In detail, a robot is to take care of a certain task, for instance moving to a certain landmark. 
The given robot task is described by certain performance and safety specifications $\varphi=\varphi_1\wedge\varphi_2\ldots\wedge\varphi_n$. 
For example, it may not be safe to take the shortest route because there may be too many obstacles in that route. 
In order to satisfy performance considerations, the robot should prefer to take the shortest route possible while not violating the safety specifications.
%We express these considerations by a set of . 
We model the behavior of the robot inside a stochastic environment as an MDP $\mdp$.
%An MDP gives rise to choices of certain \emph{actions} to perform and to randomness in the environment: A strategy might induce a high probability to achieve the goal while with a low probability, the robot might slip and therefore fail to complete the task.

%Therefore, it is necessary for the autonomy protocol to have an abstraction of the human behavior to assist the human.\nj{I still don't get it. the human can't understand an MDP so the autonomy protocol needs a representation? What representation? Wtf do you want to say here? :)

In our setting, a \emph{human} issues a set of commands for the robot to execute. 
It may be unrealistic that a human can grasp an MDP that models a realistic shared control scenario.
Indeed, a human will likely have difficulties interpreting a large number of possibilities and the associated probability of paths and payoffs~\cite{fryer2008categorical}, and it may be impractical for the human to provide the human strategy to the autonomy protocol, due to the possibly large state space of the MDP. 
Therefore, we compute a human strategy $\sched_h$ as an abstraction of a sequence of human's commands, which we obtain using inverse reinforcement learning~\cite{abbeel2004apprenticeship,ziebart2008maximum}.

%\nj{Should we recall here that we need sth like the simulation environment to actually collect human input? Or is Intro and end of paper enough?}
%Due to the complexity of task specifications, an imperfect input interface or the lack of detailed information, the human strategy may not satisfy the specifications. 
%Put differently, at design time we compute an abstract strategy $\sched_h$ from the human's inputs using inverse reinforcement learning. 
%Using learning to compute human's strategy allows us to account for a variety of imperfections, including interface imperfections and possibly lack of knowledge of human about the underlying MDP and the specifications. 

%To get a human strategy using learning, we asked humans to participate in two case studies.
%In our first case study, we model a typical shared control scenario based on a wheelchair~\cite{wheelchair-demillan}. 
%In our second case study, we consider an unmanned aerial vehicle (UAV) mission planning, where the human operator is to patrol certain regions while keeping away from enemy aerial vehicles. 
%In both of the scenarios, the underlying MDP is unknown to the human, therefore the autonomy protocol does not know the human strategy exactly. 
%Having a human user solving the tasks, we compute strategies of the humans from the obtained data using \emph{inverse reinforcement learning}~\cite{ng2000algorithms,abbeel2004apprenticeship}.

%\lmc{is this paragraph out of place here? The paragraph comes right after the learning stuff, and the placing makes sense to me, but I am not sure}

We design an autonomy protocol that provides another strategy $\sched_a$, which we call the autonomy strategy.  
Then, we blend the two strategies according to the blending function $\blendFunc$ into the blended strategy $\sched_{ha}$. The blending function reflects preference over the human strategy or the autonomy strategy.
We ensure that the blended strategy deviates minimally from the human strategy. 

%\lmc{i switched to the "similar strategy" instead of deviating only minimally in the abstract, ufuk complained about it. shall we make it similar all the way? or shall we keep the "deviating only minimally" in the rest of the text?} 
At runtime, we can then blend commands of the human with commands of the autonomy strategy. 
The resulting ``blended'' commands will induce the same behavior as the blended strategy $\sched_{ha}$, and the specifications are satisfied. 
Note that blending commands at runtime according to predefined blending function and autonomy protocol simply requires a linear combination of real values and is thus very efficient.
%This procedure requires to check feasibility of a number of linear programming problems, which can be expensive, but it is very efficient to implement at run time. 

	The \emph{shared control synthesis problem} is then the synthesis of the repaired strategy $\sigma_{ha}$ such that it holds that $\sigma_{ha}\models\varphi$ while \emph{deviating minimally} from $\sigma_h$. 
	The deviation between the human strategy $\sched_h$ and the repaired strategy $\sched_{ha}$ is measured by the maximal difference between the two strategies in each state of the MDP. We state the problem that we study as follows.
	
	\begin{problem}
	Let $\mdp$ be an MDP, $\varphi$ be an LTL specification, $\sched_h$ be a human strategy, and $\beta$ be a constant. 
	Synthesize a repaired strategy $\sched_{ha} \in \Sched^{\mdp}$  that solves the following problem.
	\begin{align}
	\underset{\sched_{ha} \in \Sched^{\mdp}}{\textnormal{minimize}}&\quad \underset{s \in S, \act \in \Act}{\textnormal{max}} |\sched_h(s,\act) - \sched_{ha}(s,\act) |\\
	\textnormal{subject to}&\nonumber\\
	& \quad \mathbb{P}_{\mdp^{\sched_{ha}}}(\sinit \models \varphi)\geq \beta.
	\end{align}
%	If such a strategy exists, provide an algorithm to synthesize it.
	\end{problem}
For convenience, we will use the original MDP $\mdp$ instead of the product MDP $\mdpp$ in what follows as all concepts are directly transferrable.

\section{Synthesis of the autonomy protocol}\label{sec:synthesissec}

In this section, we describe our approach to synthesize an autonomy protocol for the shared control synthesis problem. We start by formalizing the concepts of strategy blending and strategy repair. We then show how we can synthesize a repaired strategy that deviates minimally from the human strategy based on quasiconvex programming. We  discuss how we can include additional specifications to the problem and discuss other measures for the human and the repaired strategy that induce a similar behavior.

\subsection{Strategy blending} \noindent Given the human strategy $\sigma_h\in\Sched^{\mdp}$ and the autonomy strategy $\sigma_a\in\Sched^{\mdp}$, a blending function computes a weighted composition of the two strategies by favoring one or the other strategy in each state of the MDP~\cite{javdani2015shared,dragan-et-al-assistive-teleoperation,dragan-et-al-policy-blending}.
%In addition, factors probably unknown or incomprehensible to the human such as safety or performance specifications should also be reflected by such a function. 

%\lmc{is this discussion necessary here?}
Reference \cite{dragan-et-al-policy-blending} argues that the weight of blending shows the \emph{confidence} in how well the autonomy protocol can assist to perform the human's task.
Put differently, the blending function should assign a \emph{low confidence} to the actions of the human if they may lead to a violation of the specifications.
Recall Fig.~\ref{fig:gridworld} and the example in the introduction.
In the cells of the gridworld where some actions may result in a collusion with the vacuum cleaner with a high probability, it makes sense to assign a higher weight to the autonomy strategy.    

We pick the blending function as a \emph{state-dependent} function that weighs the confidence in both the human strategy and the autonomy strategy at each state of the MDP $\mdp$~\cite{javdani2015shared,dragan-et-al-assistive-teleoperation,dragan-et-al-policy-blending}.
%A more fine-grained instantiation might incorporate not only the current state of the MDP but also the strategies of both human and autonomy or history of a current run of the system. 
%We will use such a formalism, called \emph{linear blending}, to synthesize the autonomy protocol. 

%In~\cite{DBLP:journals/corr/Trautman15}, additional notions of blending are discussed. 
%More formal factors will be discussed in Section~\ref{sec:human_model}. 

\begin{definition}[Linear blending]\label{def:blending}
Given the  MDP $\MdpInit$, two memoryless strategies $\sched_h,\sched_a\in\Sched^{\mdp}$, and a \emph{blending function} $\blendFunc\colon S\rightarrow [0,1]$,  \emph{the blended strategy $\sigma_{ha}\in\Sched^{\mdp}$} for all states $s\in S$, and actions $\act\in\Act$ is
			\begin{align}\label{eq:blending}
				\sigma_{ha}(s,\alpha)=\blendFunc(s)\cdot\sched_h(s,\alpha) + (1-\blendFunc(s))\cdot\sched_a(s,\alpha).
			\end{align}			
\end{definition}
%\smallskip
%\lmc{think about this}
% Note that the blended strategy $\sigma_{ha}$ is a well-defined randomized strategy. 
For each $s\in S$, the value of $\blendFunc(s)$ represents the ``weight'' of $\sigma_h$ at $s$, meaning how much emphasis the blending function puts on the human strategy at state $s$. For example, referring back to Fig.~\ref{fig:gridworld}, the critical cells of the gridworld correspond to certain states of the MDP $\mdp$. At these states, we may assign a very low confidence in the human strategy. For instance at such a state $s\in S$, we might have $b(s)=0.1$, meaning the blended strategy in state $s$ puts more emphasis on the autonomy strategy $\sigma_a$.

\subsection{Solution to the shared control synthesis problem}\label{sec:synthesislp}
\noindent In this section, we propose an algorithm for solving the shared control synthesis problem.
%We consider repairing the human strategy instead of blending to satisfy the specifications. 
Our solution is based on quasiconvex programming which can be solved by checking feasibility of a number of convex optimization problems.
We show that the result of the quasiconvex program is the repaired strategy as in Problem~1. 
The strategy satisfies the task specifications while deviating minimally from the human strategy.
We use that result to compute the autonomy strategy $\sched_a$ that may then be subject to blending.
%\nj{check if story still makes sense after removing our prior work}

% We start by formalizing the concept of \emph{strategy perturbation}. Then, we introduce the dual linear programming (LP) formulation to synthesize a strategy in an MDP. Finally, we present an quasiconvex programming formulation of the strategy repair problem and show that the solution indeed satisfies the specifications while deviating from the human's strategy minimally.

\subsubsection{Perturbation of strategies}
\noindent As mentioned in the introduction, the blended strategy should deviate minimally from the human strategy. To measure the quantity of such a deviation, we introduce the concept of \emph{perturbation}, which was used in~\cite{chen-et-al-concur-2014-pertubation}. 
	To modify a (randomized) strategy, we employ \emph{additive perturbation} by increasing or decreasing the probabilities of action choices in each state. We also ensure that for each state, the resulting strategy is a well-defined distribution over the actions.

\begin{definition}[Strategy perturbation]\label{def:perturbationlp}
	Given the  MDP $\mdp$ and a strategy $\sched\in\Sched^{\mdp}$, a \emph{perturbation} $\delta$ is a function $\delta\colon S\times\Act\rightarrow[-1,1]$ with
	\begin{align*}
		\sum_{\act\in\Act}\delta(s,\act)=0 \quad \forall s\in S. 
	\end{align*}	
	The \emph{perturbation value} at state $s$ for action $\act$ is $\delta(s,\act)$. Overloading the notation, the \emph{perturbed strategy} $\delta(\sched)$ is 
	\begin{align}
		\delta(\sched)(s,\act)=\sched(s,\act)+\delta(s,\act) \quad \forall s\in S,\act\in\Act.\label{eq:perturbationlp} 
	\end{align}
\end{definition}

\subsubsection{Dual linear programming formulation for MDPs}

In this section, we recall the LP formulation to compute a strategy that maximizes the probability of satisfying a specification $\varphi$ in an MDP~\cite{puterman2014markov,forejt2011quantitative}. 
%For an MDP $\mdp$, the problem of synthesizing a strategy $\sched \in \Sched^{\mdp}$ that satisfies an LTL formula $\varphi$ with a probability of at least $\beta$ is equivalent to synthesizing a memoryless strategy $\sched_p \in \Sched^{\mdp}$ that reaches the accepting MECs in the  MDP $\mdp$ with a probability of at least $\beta$. 
Let $B$ the set of states in accepting end components in $\mdp$ (or in fact within in the product MDP $\mdpp$) and let $S_r$ be the set of all states that are not in $B$ and have nonzero probability of reaching a state $s \in B$. These sets can be computed in time polynomial in the size of $\mdp$ by doing a graph search over the  MDP $\mdp$~\cite{BK08}. In this section, we assume that there exists a strategy $\sched \in \Sched^{\mdp}$ that satisfies an LTL formula with a probability of at least $\beta$, which can be verified in time polynomial by solving a linear programming problem~\cite{BK08}.

The variables of the dual LP formulation are following:
%\lmc{add constraints for non-negativity, i define them there?}
		%
		\begin{itemize}
			\item $x_{\sched_{ha}}(s,\alpha)\in [0,\infty)$ for each state $s\in S_r$ and action $\act\in\Act$ defines the occupancy measure of a state-action pair for the strategy $\sched_{ha}$, i.e., the expected number of times of taking action $\act$ in state $s$. 
			\item $x_{\sched_{ha}}(s) \in [0, 1]$ for each state $s \in B$ defines the probability of reaching a state $s\in B$ in an accepting end component.
		\end{itemize}

\begin{align}
		\displaystyle	&\text{maximize} \quad  \sum_{s\in B}x_{\sched_{ha}}(s)\label{eq:strategylp:obj}\\
			&\text{subject to} \nonumber
\\
		&\displaystyle		\forall s\in S_r.	\nonumber\\
		\quad & \sum_{\act\in\Act}x_{\sched_{ha}}(s,\act) = \sum_{s'\in S_r}\sum_{\act\in\Act}\probmdp(s',\act,s)x_{\sched_{ha}}(s',\act)+\alpha_s\label{eq:strategylp:welldefined_sched}\\
				&\displaystyle		\forall s\in B.	\nonumber\\
	\displaystyle&	x_{\sched_{ha}}(s)= \sum_{s'\in S_r}\sum_{\act\in\Act}\probmdp(s',\act,s)x_{\sched_{ha}}(s',\act)+\alpha_s\label{eq:strategylp:mec_sched}\\
		\displaystyle		&	 \sum_{s \in B} x_{\sched_{ha}}(s) \geq \beta\label{eq:strategylp:probthresh}
\end{align}
where $\alpha_s=1$ if $s=\sinit$ and $\alpha_s=0$ if $s\neq \sinit$. The constraints~\eqref{eq:strategylp:welldefined_sched} and~\eqref{eq:strategylp:mec_sched} ensure that the expected number of times transitioning to a state $s \in S$ is equal to the expected number of times to take action $\act$ that transitions to a different state $s' \in S$. The constraint~\eqref{eq:strategylp:probthresh} ensures that the specification $\varphi$ is satisfied with a probability of at least $\beta$. We refer the reader to~\cite{puterman2014markov,forejt2011quantitative} for details about the constraints in the LP.

For any optimal solution $x$ to the LP in~\eqref{eq:strategylp:obj}--\eqref{eq:strategylp:probthresh}, 
\begin{align}
\displaystyle \sched_{ha}(s,\act)= \dfrac{x_{\sched_{ha}}(s,\act)}{ \displaystyle\sum_{\act\in\Act}x_{\sched_{ha}}(s,\act)}\label{eq:occupmeasure}
\end{align} 
is an optimal strategy, and $x_{\sched_{ha}}$ is the occupancy measure of $\sched_{ha}$, see~\cite{puterman2014markov} and~\cite{forejt2011quantitative} for details. After finding an optimal solution to the LP in~\eqref{eq:strategylp:obj}--\eqref{eq:strategylp:probthresh}, we can compute the probability of satisfying a specification by $$\sum_{s \in B} x_{\sched_{ha}}(s).$$

%\subsection{Computing the occupancy measure of the human strategy}
%
%\noindent Given the strategy for the human $\sched_h\in\Sched^{\mdp_r}$, the aim of the autonomy protocol is to compute the autonomy strategy $\sched_a$ that induces a similar behavior to the human strategy while satisfying the specifications. To compute the autonomy strategy, we first need to compute the occupancy measure for the human strategy $\sched_h$. 
%\lmc{the occupancy measure subsection is not necessary}

\subsubsection{Strategy repair using quasiconvex programming}

\noindent Given the human strategy, $\sched_h\in\Sched^{\mdp}$, the aim of the autonomy protocol is to compute the blended strategy, or the repaired strategy $\sched_{ha}$ that induces a similar behavior to the human strategy while satisfying the specifications. We compute the repaired strategy by perturbing the human strategy, which is introduced in Definition~\ref{def:perturbationlp}. We show our formulation to compute the repaired strategy in the following Lemma.

\begin{lemma}
The shared control synthesis problem can be formulated as the following nonlinear programming program with following variables:

		\begin{itemize}
			\item \emph{$x_{\sched_{ha}}(s,\alpha)\in [0,\infty)$ for each state $s\in S_r$ and action $\act\in\Act$ and $x_{\sched_{ha}}(s) \in [0, 1]$ for each state $s \in B$ as defined for the optimization problem in~\eqref{eq:strategylp:obj}--\eqref{eq:strategylp:probthresh}.}
			\item \emph{$\hat{\delta} \in [0, 1]$ gives the maximal deviation between the human strategy $\sched_h$ and the repaired strategy $\sched_{ha}$.}
		\end{itemize}

\begin{align}
		\displaystyle	&\emph{\text{minimize}} \quad  \hat{\delta}\label{eq:l1_strategylp:obj}\\
			&\emph{\text{subject to}} \nonumber
\\
		&\displaystyle		\forall s\in S_r.	\nonumber\\
		\quad & \sum_{\act\in\Act}x_{\sched_{ha}}(s,\act) = \sum_{s'\in S_r}\sum_{\act\in\Act}\probmdp(s',\act,s)x_{\sched_{ha}}(s',\act)+\alpha_s\label{eq:l1_strategylp:welldefined_sched}\\
				&\displaystyle		\forall s\in B.	\nonumber\\
	\displaystyle&	x_{\sched_{ha}}(s)= \sum_{s'\in S_r}\sum_{\act\in\Act}\probmdp(s',\act,s)x_{\sched_{ha}}(s',\act)+\alpha_s\label{eq:l1_strategylp:mec_sched}\\
		\displaystyle		&	 \sum_{s \in B} x_{\sched_{ha}}(s) \geq \beta\label{eq:l1_strategylp:probthresh}\\
		&\displaystyle		\forall s\in S_r.	\nonumber\\
		&	\vert x_{ha}(s,\act)-\sum_{\act\in\Act}x_{ha}(s,\act)\sigma_h(s,\act)\vert\leq\hat{\delta}\sum_{\act\in\Act}x_{ha}(s,\act).\label{eq:l1_perturbationlpsynthesis_occup_quasi2}
\end{align}

\end{lemma}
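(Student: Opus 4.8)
The plan is to prove the equivalence between Problem~1 and the stated program by two moves: eliminating the nested maximum through an epigraph variable, and re-parametrizing the search over strategies $\sched_{ha}$ as a search over occupancy measures. First I would rewrite the objective of Problem~1: minimizing $\max_{s\in S,\act\in\Act}|\sched_h(s,\act)-\sched_{ha}(s,\act)|$ is, by the standard epigraph reformulation, equivalent to minimizing a fresh scalar $\hat\delta\in[0,1]$ subject to $|\sched_h(s,\act)-\sched_{ha}(s,\act)|\le\hat\delta$ for every state $s$ and action $\act$. This produces the objective~\eqref{eq:l1_strategylp:obj} and reduces the task to expressing both the specification constraint and this pointwise deviation bound in the occupancy-measure variables.

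Second, I would invoke the dual LP characterization recalled in~\eqref{eq:strategylp:obj}--\eqref{eq:strategylp:probthresh} together with~\cite{puterman2014markov,forejt2011quantitative}. Every strategy $\sched_{ha}$ with $\mathbb{P}_{\mdp^{\sched_{ha}}}(\sinit\models\varphi)\ge\beta$ induces occupancy values $x_{\sched_{ha}}$ that satisfy the flow equations~\eqref{eq:l1_strategylp:welldefined_sched}--\eqref{eq:l1_strategylp:mec_sched} and the threshold~\eqref{eq:l1_strategylp:probthresh}; conversely, any feasible $x_{\sched_{ha}}$ yields, through the normalization~\eqref{eq:occupmeasure}, a strategy attaining the same satisfaction probability $\sum_{s\in B}x_{\sched_{ha}}(s)$. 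Hence the probabilistic constraint of Problem~1 is captured exactly by~\eqref{eq:l1_strategylp:welldefined_sched}--\eqref{eq:l1_strategylp:probthresh}, and only the deviation bound remains to translate.

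Third, I would substitute the recovery formula~\eqref{eq:occupmeasure} into the pointwise bound. Writing $n(s)=\sum_{\act\in\Act}x_{\sched_{ha}}(s,\act)$ for the total occupancy of $s$, the bound $|\sched_h(s,\act)-x_{\sched_{ha}}(s,\act)/n(s)|\le\hat\delta$ is multiplied through by the nonnegative scalar $n(s)$, which preserves the inequality and produces $|\,\sched_h(s,\act)\,n(s)-x_{\sched_{ha}}(s,\act)\,|\le\hat\delta\,n(s)$ --- exactly constraint~\eqref{eq:l1_perturbationlpsynthesis_occup_quasi2}. For $n(s)>0$ the two forms are equivalent, and the clearing of the denominator is precisely what removes the $0/0$ ill-definedness of~\eqref{eq:occupmeasure} and turns the constraint into a bilinear relation in $(x_{\sched_{ha}},\hat\delta)$, which is what makes the overall formulation nonlinear.

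The step I expect to be the main obstacle is the careful bookkeeping over states where $n(s)=0$ and the reconciliation of the ``for all $s\in S$'' quantifier in Problem~1 with the restriction to $s\in S_r$ in~\eqref{eq:l1_perturbationlpsynthesis_occup_quasi2}. When $n(s)=0$ the recovered strategy is undetermined; the cleared constraint then degenerates to $x_{\sched_{ha}}(s,\act)=0$, which is automatically consistent and imposes no bound on $\hat\delta$. I would argue this is harmless: the induced Markov chain $\mdp^{\sched_{ha}}$, and therefore $\mathbb{P}_{\mdp^{\sched_{ha}}}(\sinit\models\varphi)$, depends on $\sched_{ha}$ only through visited states, so at any unvisited state we may set $\sched_{ha}(s,\cdot)=\sched_h(s,\cdot)$, achieving zero local deviation; such states never determine the optimal $\hat\delta$, and dropping their deviation constraints (or leaving them vacuous) is without loss of generality. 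The analogous remark covers states in $B$ and states that cannot reach $B$: once an accepting end component is entered the specification is already met, so the within-component choices are irrelevant to $\varphi$ and can again be aligned with $\sched_h$. Collecting these observations shows that every feasible point of one problem maps to a feasible point of the other with equal objective value, which establishes the claimed reformulation.
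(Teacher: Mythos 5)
Your proposal is correct and follows essentially the same route as the paper's own proof: encode the specification constraint through the dual occupancy-measure LP, introduce $\hat\delta$ as an epigraph (equivalently, perturbation) bound on the pointwise deviation, substitute the normalization~\eqref{eq:occupmeasure}, and clear the denominator to arrive at~\eqref{eq:l1_perturbationlpsynthesis_occup_quasi2}. Your additional bookkeeping for states with zero total occupancy and for reconciling the quantifier over all of $S$ in Problem~1 with the restriction to $S_r$ is sound, and is in fact more careful than the paper's proof, which passes over these corner cases silently.
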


\begin{proof}
For any solution to the optimization problem above, the constraints in~\eqref{eq:l1_strategylp:welldefined_sched}--\eqref{eq:l1_strategylp:probthresh} ensure that the strategy computed by~\eqref{eq:occupmeasure} satisfies the specification. We now show that by minimizing $\hat{\delta}$, we minimize the maximal deviation between the human strategy and the repaired strategy.
%that satisfies the constraint in~\eqref{eq:l1_perturbationlpsynthesis_occup_quasi2}, 

As in Definition~\ref{def:perturbationlp}, we perturb the human strategy $\sched_h$ to the repaired strategy $\sched_{ha}$ by

	\begin{align}
	\forall s\in S_r.\act\in\Act. \quad	\sched_{ha}(s,\act)=\sched_h(s,\act)+\delta(s,\act). \label{eq:perturbationlpsynthesis}
	\end{align}
		
Note that this constraint is not a function of the occupancy measure of $\sched_{ha}$. By using the definition of occupancy measure in~\eqref{eq:occupmeasure}, we reformulate the constraint in~\eqref{eq:perturbationlpsynthesis} into the constraint

	\begin{align}
  \displaystyle\forall s\in S_r.\act\in\Act.	\quad	\dfrac{x_{ha}(s,\act)}{ \displaystyle\sum_{\act\in\Act}x_{ha}(s,\act)}=\sched_h(s,\act)+\delta(s,\act) \label{eq:perturbationlpsynthesis_occup}
	\end{align}
or equivalently to the constraint
		\begin{align}
	& \forall s\in S_r.\act\in\Act.\nonumber\\
	& \quad	x_{ha}(s,\act)=\sum_{\act\in\Act}x_{ha}(s,\act) \left(\sched_h(s,\act)+\delta(s,\act)\right). \label{eq:perturbationlpsynthesis_occup_lp}
	\end{align}
	 
%Note that the constraint in~\eqref{eq:perturbationlpsynthesis_occup_lp} is not a linear constraint because of multiplication of the perturbation variables $\delta$ and occupancy measure variables $x$. 
Since we are interested in minimizing the maximal deviation, we assign a common variable $\hat{\delta} \in [0,1]$ for all state-action pairs in the MDP $\mdp$ to put an upper bound on the deviation by
		\begin{align}
	&\forall s\in S_r.\act\in\Act. \nonumber \\
	&\vert x_{ha}(s,\act)-\sum_{\act\in\Act}x_{ha}(s,\act)\sigma_h(s,\act)\vert\leq&\hat{\delta}\sum_{\act\in\Act}x_{ha}(s,\act). \label{eq:perturbationlpsynthesis_occup_quasi}
	\end{align}
	
Therefore, by minimizing $\hat{\delta}$ subject to the constraints in~\eqref{eq:l1_strategylp:welldefined_sched}--\eqref{eq:l1_perturbationlpsynthesis_occup_quasi2} we ensure that the repaired strategy $\sched_{ha}$ deviates minimally from the human strategy $\sched_{h}$.
	\end{proof}
The constraint in~\eqref{eq:perturbationlpsynthesis_occup_quasi} is a nonlinear constraint. In fact, it is a quadratic constraint due to multiplication of $\hat{\delta}$ and $x_{ha}$. However, we show that the problem in~\eqref{eq:l1_strategylp:obj}--\eqref{eq:l1_perturbationlpsynthesis_occup_quasi2} is a quasiconvex programming problem, which can be solved efficiently using bisection over $\hat{\delta}$~\cite{boyd_convex_optimization}.

 %the feasible set of strategy scales monotonically with $\hat{\delta}$, \ie, more  are feasible with increasing $\hat{\delta}$. Note that for a fixed $\hat{\delta}$, the above constraint is linear, and therefore it is a convex constraint. 

\begin{lemma}
The constraint in~\eqref{eq:perturbationlpsynthesis_occup_quasi} is quasiconvex, therefore the nonlinear programming problem in~\eqref{eq:l1_strategylp:obj}--\eqref{eq:l1_perturbationlpsynthesis_occup_quasi2} is a quasiconvex programming problem.
\end{lemma}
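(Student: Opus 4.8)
The plan is to exhibit the constraint \eqref{eq:perturbationlpsynthesis_occup_quasi} in \emph{ratio form} and appeal to the fact that the ratio of a nonnegative convex numerator over a positive affine denominator is quasiconvex. First I would fix a state $s\in S_r$ and an action $\act\in\Act$ and abbreviate the affine quantities
\[ f_{s,\act}(x)=x_{ha}(s,\act)-\sigma_h(s,\act)\sum_{\act'\in\Act}x_{ha}(s,\act'), \qquad g_{s}(x)=\sum_{\act'\in\Act}x_{ha}(s,\act'), \]
so that \eqref{eq:perturbationlpsynthesis_occup_quasi} reads $|f_{s,\act}(x)|\le\hat\delta\, g_s(x)$. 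On the region where the total occupancy $g_s(x)$ is strictly positive---which holds for the reachable states under the standing reachability assumption, while $g_s(x)=0$ forces every $x_{ha}(s,\act')=0$ and hence $f_{s,\act}(x)=0$ by the flow constraints \eqref{eq:l1_strategylp:welldefined_sched}, so the constraint holds trivially there---this is equivalent to $q_{s,\act}(x):=|f_{s,\act}(x)|/g_s(x)\le\hat\delta$.

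Next I would establish quasiconvexity of each $q_{s,\act}$ in $x$ by the standard sublevel-set argument: for any level $\alpha\ge 0$,
\[ \{x : q_{s,\act}(x)\le\alpha\} = \{x : |f_{s,\act}(x)|-\alpha\,g_s(x)\le 0\}, \]
and since $|f_{s,\act}(\cdot)|$ is convex (an absolute value of an affine map) while $-\alpha\, g_s(\cdot)$ is affine, the map $|f_{s,\act}|-\alpha g_s$ is convex and its sublevel set is convex; hence $q_{s,\act}$ is quasiconvex (cf.~\cite{boyd_convex_optimization}). Eliminating $\hat\delta$ through its defining role, the objective \eqref{eq:l1_strategylp:obj} equals $\max_{s\in S_r,\act\in\Act} q_{s,\act}(x)$, a pointwise maximum of quasiconvex functions and therefore itself quasiconvex. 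The remaining constraints \eqref{eq:l1_strategylp:welldefined_sched}--\eqref{eq:l1_strategylp:probthresh} are affine in $x$ and define a polyhedron, so the program minimizes a quasiconvex objective over a convex set, which is by definition a quasiconvex program.

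Finally I would make the solution method explicit to match the claimed bisection scheme. For a fixed threshold $t\in[0,1]$ put
\[ \phi_t(x)=\max_{s\in S_r,\act\in\Act}\bigl(|f_{s,\act}(x)|-t\, g_s(x)\bigr), \]
which is convex in $x$ for each fixed $t$ as a maximum of convex functions and which satisfies $\max_{s,\act}q_{s,\act}(x)\le t \iff \phi_t(x)\le 0$. Thus feasibility of $\hat\delta\le t$ reduces to the convex (indeed linear, after splitting each absolute value into the two affine inequalities $f_{s,\act}(x)\le t\, g_s(x)$ and $-f_{s,\act}(x)\le t\, g_s(x)$) feasibility problem $\phi_t(x)\le 0$ together with \eqref{eq:l1_strategylp:welldefined_sched}--\eqref{eq:l1_strategylp:probthresh}. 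Since $\phi_t(x)$ is nonincreasing in $t$, feasibility is monotone in $t$, so bisecting over $[0,1]$ and solving one LP feasibility problem per step converges to the optimal $\hat\delta$.

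The step I expect to be the main obstacle is the careful justification that the bilinear term $\hat\delta\, g_s(x)$---which destroys \emph{joint} convexity in $(x,\hat\delta)$---nevertheless yields a quasiconvex problem; the crux is recognizing the perspective/ratio structure and verifying the edge case $g_s(x)=0$ so that $q_{s,\act}$ and the family $\phi_t$ are well defined over the entire feasible region.
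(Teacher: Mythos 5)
Your proof is correct and takes essentially the same route as the paper's: the paper's entire argument is that for fixed $\hat{\delta}$ the inequality \eqref{eq:perturbationlpsynthesis_occup_quasi} describes a convex (indeed polyhedral) set in the occupancy variables, i.e., the sublevel sets are convex, hence the constraint is quasiconvex and the program is a QCP --- which is exactly your sublevel-set computation for $q_{s,\act}$. Your extra care with the ratio form, the degenerate case $g_s(x)=0$, and the monotonicity of feasibility in $t$ goes beyond what the paper writes (its proof is a one-liner citing Boyd, Section 3.4) but does not constitute a different approach.
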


\begin{proof}
For a fixed $\hat{\delta}$, the set described by the inequality in~\eqref{eq:perturbationlpsynthesis_occup_quasi}  is convex, that is, the \emph{sublevel sets} of the function are convex~\cite[Section 3.4]{boyd_convex_optimization}. Therefore, the constraint in~\eqref{eq:perturbationlpsynthesis_occup_quasi} is quasiconvex and the nonlinear programming problem in~\eqref{eq:l1_strategylp:obj}--\eqref{eq:l1_perturbationlpsynthesis_occup_quasi2} is a quasiconvex programming problem (QCP).
\end{proof}

We solve the QCP in~\eqref{eq:l1_strategylp:obj}--\eqref{eq:l1_perturbationlpsynthesis_occup_quasi2} by employing bisection over the variable $\hat{\delta}$. We initialize a lower and upper bound of the maximal deviation between the human strategy and the repaired strategy to $0$ and $1$ respectively. Then, we iteratively refine the bounds by solving a number of convex feasibility problems. A method to solve quasiconvex optimization problems is given in~\cite[Algorithm 4.1]{boyd_convex_optimization}. Our approach is given in Algorithm~1 based on the Algorithm 4.1 in~\cite{boyd_convex_optimization}. We now state the main result of the paper.

\begin{algorithm}
\label{al:algo}
\DontPrintSemicolon
\textbf{given} $\MdpInit$, $\sched_h$, $l=0, u=1$, tolerance $\epsilon >0$.\;
\textbf{repeat}\;
\quad 1. Set $\hat{\delta}=(l+u)/2.$\;
\quad 2. Solve the convex feasibility problem in~\eqref{eq:l1_strategylp:welldefined_sched}--\eqref{eq:l1_perturbationlpsynthesis_occup_quasi2}.\;
\quad 3. \textbf{if} the problem in~\eqref{eq:l1_strategylp:welldefined_sched}--\eqref{eq:l1_perturbationlpsynthesis_occup_quasi2} is feasible, \textbf{then}\; \qquad\qquad $u:= \hat{\delta}$,
$\displaystyle \sched_{ha}(s,\act)= \dfrac{x_{\sched_{ha}}(s,\act)}{ \displaystyle\sum_{\act\in\Act}x_{\sched_{ha}}(s,\act)}$\;\;
 \qquad \enskip\textbf{else}  $l:= \hat{\delta}$.\;
\textbf{until} $u-l\leq \epsilon$.\;
\caption{Bisection method to synthesize an optimal repaired strategy $\sched_{ha}$ for the shared control synthesis problem.\label{IR}}
\end{algorithm}
	
%The above algorithm requires exactly  $\ceil*{\log_2(\dfrac{1}{\epsilon})}$ iterations, and we can compute a strategy that induces a minimal deviation up to any given accuracy of $\epsilon.$

%An \emph{assignment} of real-valued variables is a function $\ass\colon\Vars\rightarrow\R$, which it is \emph{satisfying} for a set of constraints, if each constraint in the set evaluates to \true. A satisfying assignment $\ass^*$ is \emph{optimal} with respect to objective $o$ if, for $\ass^*(o)\in\R$, there is no other assignment $\ass'$ with $\ass'(o)<\ass^*(o)$. Using these notions, we state the correctness of the QCP in~\eqref{eq:strategyoccup:min}--\eqref{eq:perturbationlpsynthesis_occup_quasi2}.

\begin{theorem}\label{theo:correctness}
	The repaired strategy $\sched_{ha}$ obtained from Algorithm 1 satisfies the task specifications and it deviates minimally from the human strategy $\sched_{h}$, and is an optimal solution to the shared control synthesis problem.
\end{theorem}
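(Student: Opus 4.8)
The plan is to assemble the theorem from the preceding lemmas together with the standard convergence guarantee of bisection for quasiconvex programs. First I would make precise the equivalence between the program in~\eqref{eq:l1_strategylp:obj}--\eqref{eq:l1_perturbationlpsynthesis_occup_quasi2} and Problem~1, using the lemma that formulates Problem~1 as this program. The constraints~\eqref{eq:l1_strategylp:welldefined_sched}--\eqref{eq:l1_strategylp:probthresh} are exactly the dual LP characterization of strategies satisfying $\varphi$, so any feasible $x_{\sched_{ha}}$ yields, through~\eqref{eq:occupmeasure}, a strategy with $\mathbb{P}_{\mdp^{\sched_{ha}}}(\sinit\models\varphi)\geq\beta$. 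Dividing~\eqref{eq:perturbationlpsynthesis_occup_quasi} by $\sum_{\act\in\Act}x_{ha}(s,\act)$ and substituting~\eqref{eq:occupmeasure} shows the constraint is precisely $|\sched_h(s,\act)-\sched_{ha}(s,\act)|\leq\hat{\delta}$ for every state-action pair, i.e.\ $\hat{\delta}$ upper-bounds the maximal deviation $\max_{s,\act}|\sched_h(s,\act)-\sched_{ha}(s,\act)|$. Hence the optimal value $\hat{\delta}^\ast$ of the program equals the minimal achievable maximal deviation among specification-satisfying strategies, and the two problems share their optimizers.

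Next I would establish that Algorithm~1 converges to $\hat{\delta}^\ast$. This rests on two facts. By the quasiconvexity lemma, for each fixed $\hat{\delta}$ the constraint set~\eqref{eq:l1_strategylp:welldefined_sched}--\eqref{eq:l1_perturbationlpsynthesis_occup_quasi2} is convex, so Step~2 is a genuine convex feasibility problem. The key additional property I would single out is \emph{monotonicity}: since the occupancy variables are nonnegative, the right-hand side $\hat{\delta}\sum_{\act}x_{ha}(s,\act)$ of~\eqref{eq:perturbationlpsynthesis_occup_quasi} is nondecreasing in $\hat{\delta}$, while the remaining constraints do not involve $\hat{\delta}$. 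Therefore feasibility at some value $\hat{\delta}$ implies feasibility at every $\hat{\delta}'\geq\hat{\delta}$, so the set of feasible $\hat{\delta}$ is an interval $[\hat{\delta}^\ast,1]$ and the midpoint test in Step~3 correctly decides on which side of $\hat{\delta}^\ast$ the current value lies.

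With monotonicity in hand, I would verify the loop invariant $l\leq\hat{\delta}^\ast\leq u$: it holds initially because $0\leq\hat{\delta}^\ast\leq 1$ (all strategy entries lie in $[0,1]$, so the deviation never exceeds $1$, and a specification-satisfying strategy exists by the standing assumption), and each iteration preserves it, with feasibility of the midpoint moving $u$ down to it and infeasibility moving $l$ up. Since the interval length halves every iteration, the loop terminates with $u-l\leq\epsilon$ and $u\leq\hat{\delta}^\ast+\epsilon$. The strategy returned in Step~3 is reconstructed by~\eqref{eq:occupmeasure} from a feasible point at $\hat{\delta}=u$, so it satisfies $\varphi$ and has maximal deviation at most $u$, i.e.\ within $\epsilon$ of the optimum; letting $\epsilon\to 0$ yields exact optimality and all three claims of the theorem.

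I expect the main obstacle to be the monotonicity argument and its interaction with the equivalence step, rather than the bisection bookkeeping. The quasiconvexity lemma only guarantees that each individual feasibility problem is convex; it does not by itself state that the feasibility problems are nested in $\hat{\delta}$, which is what bisection actually exploits. I would therefore argue monotonicity directly from nonnegativity of the occupancy measure, and separately confirm that the reconstruction~\eqref{eq:occupmeasure} is well defined, i.e.\ that $\sum_{\act}x_{ha}(s,\act)>0$ on the relevant states, which follows from the paper's standing reachability assumption on $\mdp$.
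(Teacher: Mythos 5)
Your proposal is correct and takes essentially the same route as the paper's own (much terser) proof: feasibility of the QCP constraints yields a specification-satisfying strategy via~\eqref{eq:occupmeasure}, and bisection reaches the minimal deviation within $\epsilon$ in $\lceil\log_2(1/\epsilon)\rceil$ iterations; the nestedness-in-$\hat{\delta}$ argument you make explicit is precisely what the paper delegates to the cited Algorithm 4.1 of~\cite{boyd_convex_optimization}. The only inaccuracy is your closing claim that $\sum_{\act\in\Act}x_{ha}(s,\act)>0$ follows from the paper's reachability assumption---that assumption guarantees reachability of every state under \emph{some} strategy, not positive occupancy under the computed one---but this is harmless: at zero-occupancy states the constraint~\eqref{eq:l1_perturbationlpsynthesis_occup_quasi2} is vacuous and one may set $\sched_{ha}(s,\cdot)=\sched_h(s,\cdot)$ there, which changes neither the satisfaction probability nor the maximal deviation (a point the paper itself passes over silently).
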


\begin{proof}
From a satisfying assignment to the constraints in~\eqref{eq:l1_strategylp:obj}--\eqref{eq:l1_perturbationlpsynthesis_occup_quasi2}, we compute a strategy that satisfies the specification using~\eqref{eq:occupmeasure}. Using Algorithm~\ref{al:algo}, we can compute the repaired strategy $\sigma_{ha}$ that deviates minimally from the human strategy $\sigma_{h}$ up to $\epsilon$ accuracy in $\ceil*{\log_2(\dfrac{1}{\epsilon})}$ iterations. Therefore, Algorithm~\ref{al:algo} computes an optimal strategy for the shared control synthesis problem.
\end{proof}

The strategy given by Algorithm~\ref{al:algo} computes the minimally deviating repaired strategy $\sched_{ha}$ that satisfies the LTL specification. 
In~\cite{jansen2017synthesis}, we considered computing an autonomy protocol with a greedy approach. That approach requires solving possibly an unbounded number of LPs to compute a feasible strategy that is not necessarily optimal. On the other hand, using Algorithm~1, we only need to check feasibility of a number of LPs that can be determined to compute an optimal strategy. Note that we do not compute the autonomy strategy $\sched_{a}$ with the QCP in~\eqref{eq:l1_strategylp:obj}--\eqref{eq:l1_perturbationlpsynthesis_occup_quasi2} directly. After computing the repaired strategy $\sigma_{ha}$, we compute the autonomy strategy $\sigma_a$ according to the Definition~\ref{def:blending}.

Computationally, the most expensive step of the Algorithm~\ref{al:algo} is checking the feasibility of the optimization problem in~\eqref{eq:l1_strategylp:welldefined_sched}--\eqref{eq:l1_perturbationlpsynthesis_occup_quasi2}. The number of variables and constraints in the optimization problem are linear in the number of states and actions in $\mdp$, therefore, checking feasibility of the optimization problem can be done in time polynomial in the size of $\mdp$ with interior point methods~\cite{nesterov1994interior}. Algorithm 1 terminates after $\ceil*{\log_2(\dfrac{1}{\epsilon})}$ iterations, therefore we can compute an optimal strategy up to $\epsilon$ accuracy in time polynomial in the size of $\mdp$.

\subsubsection{Additional specifications}
\noindent The QCP in~\eqref{eq:l1_strategylp:obj}--\eqref{eq:l1_perturbationlpsynthesis_occup_quasi2} computes an optimal strategy for a single LTL specification $\varphi$. Suppose that we are given a reachability specification $\varphi_r=\reachProp{\lambda}{T}$ with $T\in S$ in addition to the LTL specification $\varphi$. We can handle this specification by appending the constraint	
\begin{align}
		\displaystyle		&	\quad \sum_{s\in B} x_{\sched_{ha}}(s) \geq \lambda	 \label{eq:strategylp:reachprop}
		\end{align}		
to the QCP in~\eqref{eq:l1_strategylp:obj}--\eqref{eq:l1_perturbationlpsynthesis_occup_quasi2}. The constraint in~\eqref{eq:strategylp:reachprop} ensures that the probability of reaching $T$ is greater than $\lambda$.

We handle an \emph{expected cost specification} $\expRewProp{\kappa}{G}$ for $G\subseteq S$, by adding the constraint
	\begin{align}
		\displaystyle		&	\quad \sum_{s\in S_r \setminus G}\sum_{\act\in\Act} C(s,\act)x_{\sched_{ha}}(s,\act) \leq \kappa \label{eq:strategylp:expprop}
		\end{align}
to the QCP in~\eqref{eq:l1_strategylp:obj}--\eqref{eq:l1_perturbationlpsynthesis_occup_quasi2}. The constraint in~\eqref{eq:strategylp:reachprop} ensures that the expected cost of reaching $G$ is less than $\kappa$.

\subsubsection{Additional measures}
\noindent 
%In this section, we showed that minimizing the maximal deviation between the human strategy and the repaired strategy for each state and action in an MDP can be done by quasiconvex programming. 
We discuss additional measures that can be used to render the behavior between the human and the autonomy protocol similar based on the occupancy measure of a strategy.
Instead of minimizing the maximal deviation between the human strategy and the repaired strategy, we can also minimize the maximal difference of occupancy measures of the strategies. In this case, the difference between the human strategy and the repaired strategy will be smaller in states where the expected number of being in a state is higher, and will be higher if the state is not visited frequently. We can minimize the maximal difference of occupancy measures by adding the following objective to the constraints in~\eqref{eq:l1_strategylp:welldefined_sched}--\eqref{eq:l1_strategylp:probthresh}:
\begin{align*}
\displaystyle	&\emph{\textnormal{minimize}} \quad \underset{s \in S, \act \in \Act}{\textnormal{max}} |x_{\sched_{ha}}(s,\act) - x_{\sched_{h}}(s,\act) |
\end{align*}
or, equivalently,
\begin{align*}
\displaystyle	&\emph{\textnormal{minimize}} \quad ||x_{\sched_{ha}} - x_{\sched_{h}} ||_{\infty}.
\end{align*}
The occupancy measure of the human strategy can be computed by finding a feasible solution to the constraints in~\eqref{eq:l1_strategylp:welldefined_sched}--\eqref{eq:l1_strategylp:mec_sched} for the induced DTMC $\mdp^{\sched_{ha}}$. We can also minimize further convex norms of the human strategy and the repaired strategy, such as 1-norm or 2-norm.
%	

%Similar to the probability computation for reachability specifications, the expected cost for reaching $G$ is given by
%\begin{align}
% \displaystyle\sum_{s\in S\setminus G}\sum_{\act\in\Act}C(s,\act)x_{\sched_{ha}}(s,\act).
% \end{align}
% 
% \lmc{mention other measures? i.e, 1-norm, inf norm?}
%
\section{Case study and experiments}\label{sec:simulation}
\noindent
We present two numerical examples that illustrate the efficacy of the proposed approach. In the first example, we consider a wheelchair scenario from Fig.~\ref{fig:gridworld}. The goal in this scenario is to reach the target state while not crashing with the obstacle. In the second example, we consider an unmanned aerial vehicle (UAV) mission, where the objective is to survey certain regions while avoiding enemy agents.

We require an abstract representation of the human's commands as a strategy to use our synthesis approach in a shared control scenario.
%As explained in the previous chapters, we account for the inherent uncertainities and imperfections in the human's strategy by using a \emph{randomized strategy} over possible actions in the MDP. 
We first discuss how such strategies may be obtained using inverse reinforcement learning and report on case study results.
\iffalse
\input{abstraction}
\fi

\begin{figure}[t]
	\centering\scalebox{0.7}{{\includegraphics[scale=0.24]{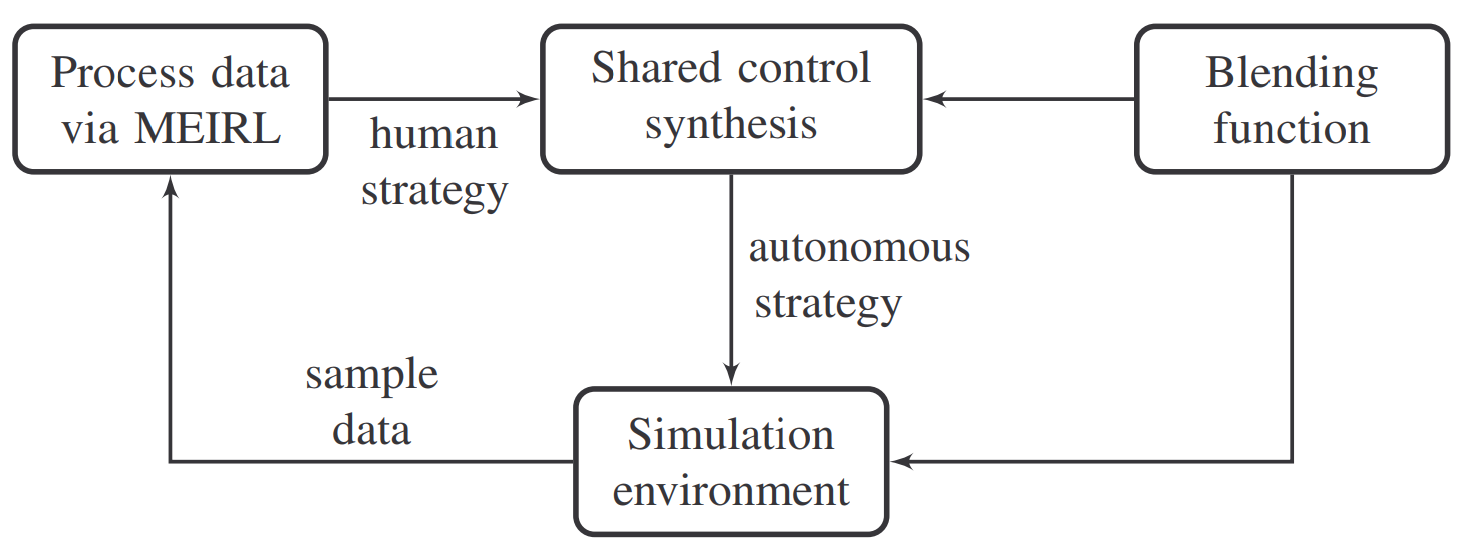}}}
	\caption{The setting of the case study for the shared control simulation. We collect sample data from a simulation environment, and compute the human strategy using maximum-entropy inverse reinforcement learning (MEIRL). From the human strategy, we compute an autonomous strategy based on our approach to the shared control synthesis problem. }	
	\label{fig:data}
\end{figure}

\subsection{Experimental setting}
We give an overview of the workflow of the experiments in Fig.~\ref{fig:data}. In an simulation environment, we collect sample data from the human's commands. Based on these commands, we compute a human strategy $\sched_h$ using maximum-entropy inverse reinforcement learning (MEIRL)~\cite{ziebart2008maximum}. After computing the human strategy, we synthesize the repaired strategy $\sched_{ha}$ using the procedure in Algorithm 1. After synthesizing the repaired strategy, we compute the autonomous strategy using~\eqref{eq:blending}. We can further refine our representation of the human strategy by collecting more sample data from the human's commands before blending with the autonomous strategy.

%We consider two scenarios, the first of which is the wheelchair scenario from Fig.~\ref{fig:gridworld}. 
We model the wheelchair scenario inside an interactive \tool{Python} environment. In the second scenario, we use the UAV simulation environment AMASE\footnote{https://github.com/afrl-rq/OpenAMASE}, developed at Air Force Research Laboratory. AMASE can be used to simulate multi-UAV missions. 
The graphical user interfaces of AMASE allow humans to send commands to one or multiple vehicles at run time. It includes three main programs: a simulator, a data playback tool, and a scenario setup tool. 

We use the model checker \tool{PRISM}~\cite{KNP11} to verify if the computed strategies satisfy the specification. %The whole tool-chain integrates the simulation environments with the approaches described in the previous sections. 
We use the LP solver \tool{Gurobi}~\cite{gurobi} to check the feasibility of the LP problems that is given in Section~\ref{sec:synthesissec}. We also implemented the greedy approach for strategy repair in~\cite{jansen2017synthesis}. In this section, we refer to the procedure given by Algorithm~1 as QCP method, and the procedure from~\cite{jansen2017synthesis} as greedy method.

\subsection{Data collection}
We asked five participants to accomplish tasks in the wheelchair scenario. The goal is moving the wheelchair to a target cell in the gridworld while never occupying the same cell as the moving obstacle. Similarly, three participants performed the surveillance task in the AMASE environment. 

From the data obtained from each participant, we compute an individual randomized human strategy $\sched_h$ via MEIRL. Reference~\cite{javdani2015shared} uses inverse reinforcement learning to reason about the human's commands in a shared control scenario from human's demonstrations. However, they lack formal guarantees on the robot's execution. 
%In~\cite{rothkopf2013modular}, inverse reinforcement learning is used to distinguish human intents with respect to different tasks. 

 In our setting, we denote each \emph{sample} as one particular command of the participant, and we assume that the participant issues the command to satisfy the specification. Under this assumption, we can bound the probability of a possible deviation from the actual intent with respect to the number of samples using Hoeffding's inequality for the resulting strategy, see~\cite{ziebart2010modeling} for details. Using these bounds, we can determine the required number of commands to get an approximation of a typical human behavior. 
The probability of a possible deviation from the human behavior is given by $\mathcal{O}(\exp(-n\gamma^2))$, where $n$ is the number of commands from the human and $\gamma$ is the upper bound on the deviation between the probability of satisfying the specification with the true human strategy and the probability obtained by the strategy that is computed by inverse reinforcement learning.
For example, to ensure an upper bound $\gamma = 0.05$ on the deviation of the probability of satisfying the specification with a probability of $0.99$, we require $1060$ demonstrations from the human. %\lmc{how to write this sentence better?}

%To synthesize the autonomy protocol, we employ the strategy repair procedure as explained in Section~\ref{sec:model_repair} and~\ref{sec:model_repair_qcqsp} because the approach based on NLP is only feasible for very small examples. 

%\lmc{necessary(blending function stuff)?}
We design the blending function by assigning a low weight to the human strategy at states where it yields a low probability of reaching the target set. %\lmc{is this sentence grammatically correct?}
Using this function, we create the autonomy strategy $\sched_a$ and pass it (together with the blending function) back to the environment. Note that the repaired strategy $\sched_{ha}$ satisfies the specification, by Theorem~\ref{theo:correctness}. %Now, we let the same participant as before do test runs, but this time we blend the human commands with the commands of the autonomy strategy $\sched_a$.

\subsection{Gridworld}
The size of the gridworld in  Fig.~\ref{fig:gridworld} is variable, and we generate a number of randomly moving (e.g., the vacuum cleaner) and stationary obstacles. An agent (e.g., the wheelchair) moves in the gridworld according to the commands from a human. For the gridworld scenario, we construct an MDP where the states represent the positions of the agent and the obstacles and the actions induce changes in the agent position.

The safety specification states that the agent has to reach a target cell while not crashing into an obstacle with a certain probability $\beta\in[0,1]$, formally $\pctlProb_{\geq \beta}(\neg \texttt{crash } \pctlUntil \texttt{ target})$. 

First, we report results for one particular participant in a gridworld scenario with a $8\times 8$ grid and one moving obstacle. 
The states of the MDP are generated by the Cartesian product of the states of the agent and the obstacle. 
The agent and the obstacle have four actions in all states, namely left, right, up and down. At each state, a transition to the chosen direction occurs with a probability of 0.7, and the agent transitions to each adjacent state in the chosen direction with a probability 0.15. 
If a transition to the wall occurs, the agent remains in the same state. We fix a particular strategy for the obstacle, and determine the transition probabilities between states as a product of transitioning to the next states for the agent and the obstacle. The resulting MDP has $2304$ states and $36864$ transitions. 
We compute the human strategy using MEIRL where the features are the components of the cost function of the human, for instance the distance to the obstacle and the goal state. 

%\lmc{is induces a probability of satisfying the spec of 0.5 correct?}
%We obtain the following results. 
We instantiate the safety specification with $\beta=0.7$, which means the target should be reached with at least a probability of $0.7$. 
The human strategy $\sched_{h}$ induces a probability of $0.546$ to satisfy the specification. That is, it does not satisfy the specification. 

%\lmc{to be fixed with new stuff}
We compute the repaired strategy $\sigma_{ha}$ using the greedy and the QCP approach, and both strategies satisfy the specification with a probability larger than $\beta$.
On the one hand, the maximum deviation between $\sched_{h}$ and $\sched_{ha}$ is 0.15 with the greedy approach, which implies that the strategy of the human and the autonomy protocol deviates at most 15\% for all states and actions. 
On the other hand, the maximum deviation between $\sched_{h}$ and $\sched_{ha}$ is 0.03 with the QCP approach. The results show that the QCP approach computes a repaired strategy that induces a more similar strategy to the human strategy compared to the LP approach.

%\lmc{necessary(heat map)?}
\begin{figure}[t]
	\subfigure[Strategy $\sched_h$]{
		\includegraphics[scale=0.4]{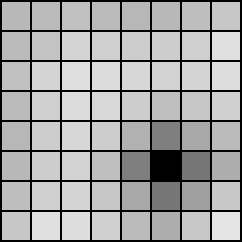}
	}
		\subfigure[Strategy  $\sched_{ah}$]{
		\includegraphics[scale=0.4]{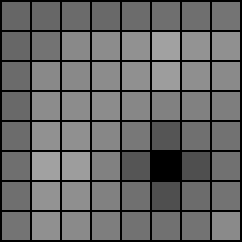}
	}
	\subfigure[Strategy $\sched_a$]{
		\includegraphics[scale=0.4]{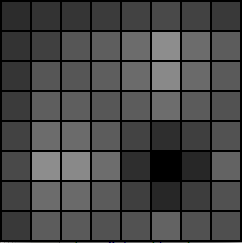}
	}
	\caption{Graphical representation of the obtained human, blended, and autonomy strategy in the grid.}
	\label{fig:heatmap}
\end{figure}
%
%For each $\sched_a$, $\sched_{ah}$, and $\sched_h$, we indicate for each cell of the gridworld the worst-case probability to safely reach the target with the LP approach. 
%Note that this probability depends on the current position of the obstacle, which is again probabilistic. 
%In Figure~\ref{fig:heatmap}, the darker the color, the higher the probability; thereby black indicates a probability of $1$ to reach the target. 
%We observe that the human's decisions are rather risky even near the target, while for the blended strategy---once the agent is near the target---there is a very high probability of reaching it safely. 
%This representation also shows that with our approach the blended strategy improves the human strategy while rendering the strategies similar.

We give a graphical representation of the human strategy $\sched_h$, repaired strategy $\sched_{ha}$, and the autonomy strategy $\sched_{a}$ in Fig.~\ref{fig:heatmap}. For each strategy, we indicate the average probability of safely reaching the target with the QCP approach. Note that the probability of reaching the target depends on the current position of the obstacle. Therefore, the probability for satisfying a specification could be higher or lower than shown in Fig.~\ref{fig:heatmap}.
In Fig.~\ref{fig:heatmap}, the probability of reaching the target increases with a darker color, and black indicates a probability of $1$ to reach the target.  
We observe that the human strategy induces a lower probability of reaching the target in most of the states, while for the repaired strategy, the probability of reaching target is higher in all cells. 
Note that the autonomy strategy induces a very high probability of reaching the target in each cell, but the autonomy strategy may be too safe and may not be similar to the human strategy.
%
%
%

%
%
%
%

%\lmc{remove grid on the table to avoid the overflow?}
\begin{table*}[t]
\centering
\caption{Scalability results for the gridworld example. We list the synthesis time of the both approaches in seconds. '$\delta_{\textnormal{G}}$' and $\delta_{\textnormal{QCP}}$' refer to the maximal deviation of the greedy and QCP approach.}
\label{tab:scalability1}
\scalebox{1}{\begin{tabular}{@{}cccccccc@{}}
\toprule
Gridworld size         & Number of states     & Number of transitions     & \vtop{\hbox{\strut Synthesis time with the}\hbox{\thinspace\thinspace greedy approach (sec)}} & $\delta_{\textnormal{G}}$  & \vtop{\hbox{\strut Synthesis time with the}\hbox{\strut \enskip \thinspace QCP approach (sec)}}  & $\delta_{\textnormal{QCP}}$ \\
\midrule
$8\times 8$    &  $2,304$    & $36,864$  & $14.12$ & $0.145$  & $31.49$ & $0.031$ \\
$10\times 10$   &  $3,600$    & $57,600$  & $23.80$ & $0.231$ &  $44.61$ & $0.042$		\\ 
$12\times 12$    &  $14,400$   & $230,400$   & $250.78$ & $0.339$ & $452.27$ & $0.050$                   \\        
$20\times 20$    &  $40,000$   & $640,000$   & $913.23$ & $0.373$ & $1682.05$ & $0.048$                   \\        
\bottomrule\end{tabular}}
\end{table*} 

To finally assess the scalability of our approach, consider Table~\ref{tab:scalability1}. We generated MDPs for different gridworlds with a different number of states and number of obstacles. We list the number of states in the MDP and the number of transitions. We report on the time that the synthesis process took with the greedy approach and QCP approach, which includes the time of solving the LPs in the greedy method or QCPs measured in seconds. It also includes the model checking times using PRISM for the greedy approach. To represent the optimality of the synthesis, we list the maximal deviation between the repaired strategy and the human strategy for the greedy and QCP approach (labeled as "$\delta_{\textnormal{G}}$" and "$\delta_{\textnormal{QCP}}$"). In all of the examples, we observe that the strategies obtained by the QCP approach yield autonomy strategies with less deviation to the human strategy while having similar computation time with the greedy approach.
 
\begin{figure}[t]\centering
	\subfigure[Snapshot of a simulation using the AMASE simulator. The objective of the agent is to keep surveilling the green regions while avoiding enemy agents and restricted operating zones.]{{\includegraphics[scale=0.251]{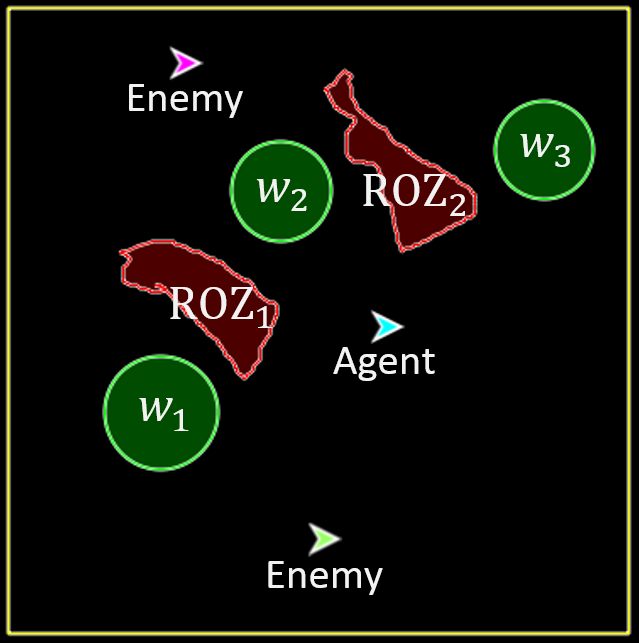}}}
	\subfigure[The graphical user interface of the AMASE simulator for a UAV mission. The user interface contains various information about the vehicles such as the speed and the heading.]{{\includegraphics[scale=0.305]{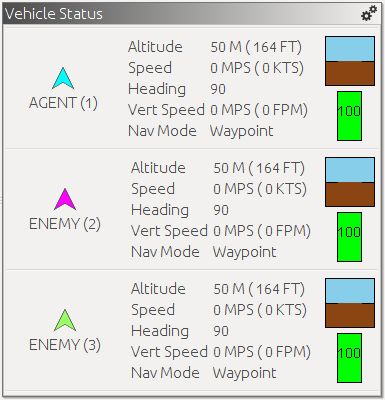}}}
	\caption{An example of UAV mission that is simulated on AMASE.}
	\label{fig:amase}
\end{figure}

\subsection{UAV mission planning}
Similar to the gridworld scenario, we generate an MDP where states denote the position of the agent and the enemy agents in an AMASE scenario. Consider an example scenario in Fig.~\ref{fig:amase}: The specification (or the mission) of the agent (blue UAV) is to keep surveilling the green regions (labeled as $w_1, w_2, w_3$) while avoiding restricted operating zones (labeled as "$\textnormal{ROZ}_1$, $\textnormal{ROZ}_2$") and enemy agents (purple and green UAVs). We asked the participants to visit the regions in a sequence, i.e., visiting the first region, then second, and then the third region. After visiting the third region, the task is to visit the first region again to perform the surveillance. %\lmc{i introduced "participants" at the beginning of the section, is it clear?}

For example, if the last visited region is $w_3$, then the safety specification in this scenario is $\pctlProb_{\geq \beta}((\neg  \texttt{crash} \enskip \wedge \enskip \neg \texttt{ROZ}) \enskip \pctlUntil\enskip \texttt{target})$, where \texttt{ROZ} is to visit the ROZ areas and \texttt{target} is visiting $w_1$.

We synthesize the autonomy protocol on the AMASE scenario with two enemy agents. The underlying MDP has 15625 states. We use the same blending function and same threshold $\beta=0.7$ as in the gridworld example. The features to compute the human strategy with MEIRL are given by the distance to the closest ROZ, enemy agents, and the target region. 

The human strategy $\sigma_h$ violates the specification with a probability of 0.496.
%For the human strategy, the probability of satisfying the specification is lower compared to the gridworld example, the autonomy protocol puts a lower confidence in the human's strategy compared to the gridworld example. \lmc{necessary?}
Again, we compute the repaired strategy $\sigma_{ha}$ with the greedy and the QCP approach. Both strategies satisfy the specification.
On the one hand, the maximum deviation between $\sigma_h$ and $\sigma_{ha}$ is 0.418 with the greedy approach, which means the strategies of the human and the autonomy protocol are significantly different in some states of the MDP. 
On the other hand, the QCP approach yields a repaired strategy $\sigma_{ha}$ that is more similar to the human strategy $\sigma_{h}$ with a maximum deviation of 0.038. 
The time of the synthesis procedure with the LP approach is 481.31 seconds and the computation time with the QCP approach is 749.18 seconds, showing the trade-offs between the greedy approach and the QCP approach. We see that the greedy approach can compute a feasible solution slightly faster, however the resulting blended strategy may be less similar to the human strategy compared to the QCP approach. 

\begin{table}[t]
\centering
\caption{Results for different specification thresholds for the probability and expected time in the AMASE example. '$\beta$' and '$\kappa$' refer to the threshold for the probability and the expected time of the specification.}
\label{tab:cost}
\scalebox{1}{\begin{tabular}{@{}ccccc@{}}
\toprule
$\beta$         & $\kappa$     & \vtop{\hbox{\strut Synthesis time with the}\hbox{\strut \enskip\thinspace  QCP approach (sec)}}  & $\delta_{\textnormal{QCP}}$ \\
\midrule
$0.7$    &  $20$   &  $827.37$ & $0.380$  \\
$0.7$    &  $40$   &  $749.14$ & $0.126$  \\
$0.7$    &  $80$   &  $722.81$ & $0.054$  \\ 
$0.9$    &  $20$   &  $888.29$ & $0.598$  \\   
$0.9$    &  $40$   &  $795.98$ & $0.163$  \\     
$0.9$    &  $80$   &  $732.41$ & $0.100$  \\          
\bottomrule\end{tabular}}
\end{table}

To assess the effect of changing the threshold of satisfying the specification, we use a different threshold $\beta=0.9$. The greedy approach did not terminate within one hour, and could not find a repaired strategy that satisfies the specification after 45 iterations. We compute a repaired strategy $\sigma_{ha}$ using the QCP approach with a maximum deviation of 0.093. The computation time with the QCP approach is 779.81 seconds, showing that the QCP approach does not take significantly more time to compute a repaired strategy even with a higher threshold. We conclude that the greedy approach may not be able to find a feasible strategy efficiently if most of the strategies in an MDP do not satisfy the specification.

We also assess the effect of adding additional constraints to the task, i.e., surveilling the next green region within a certain time step. We synthesize different policies for different expected times until the UAV reaches the next region. We summarize the results in Table~\ref{tab:cost}. For each different probability thresholds (labeled as "$\beta$") and expected times to complete the mission (labeled as "$\kappa$"), we report the synthesis time and the maximal deviation. The results in Table~\ref{tab:cost} illustrate that the maximal deviation $\delta_{\textnormal{QCP}}$ increases with increasing threshold and decreasing expected time to complete the mission. For example, with the threshold $\beta=0.9$ and expected time $\kappa=20$, the maximal deviation between the human and the repaired strategy is $0.598$, which shows that the strategies of the human and the autonomy protocol can be significantly different in some states. On the other hand, with the threshold $\beta=0.7$ and expected time $\kappa=80$, the maximal deviation between the human strategy and the repaired strategy is $0.054$, which is significantly smaller than the previous examples. We also note that there is no significant difference in synthesis time for different thresholds and expected times.

\begin{table}[t]
\centering
\caption{Results for different perturbations of the human strategy in the AMASE example. '$\delta_{\textnormal{max}}$' refers to the maximal perturbation introduced to the human strategy. $\delta_{\textnormal{ap}}$ refers to the maximal deviation between the repaired strategy and the human strategy.}
\label{tab:perturbation}
\scalebox{1}{\begin{tabular}{@{}cccccc@{}}
\toprule
$\beta$         & $\delta_{\textnormal{max}}$     & $\delta_{\textnormal{ap}}$  & \vtop{\hbox{\strut Synthesis time with the}\hbox{\strut \enskip \thinspace QCP approach (sec)}}  & $\delta_{\textnormal{QCP}}$ \\
\midrule
$0.7$    &  $0.1$    & $0.170$   &  $725.93$ & $0.032$   \\
$0.7$    &  $0.2$    & $0.274$   &  $718.14$ & $0.036$	   \\ 
$0.7$    &  $0.5$    & $0.506$   &  $696.60$ & $0.037$   \\        
$0.9$    &  $0.1$    & $0.270$   &  $732.54$ & $0.091$   \\   
$0.9$    &  $0.2$    & $0.345$   &  $745.05$ & $0.092$   \\     
$0.9$    &  $0.5$    & $0.534$   &  $798.01$ & $0.101$   \\          
\bottomrule\end{tabular}}
\end{table}

\subsection{Effect of changing the human strategy}
In this section, we investigate how changing the human strategy changes the strategy of the autonomy protocol. We perturb the human strategy from the previous example using~\eqref{eq:perturbationlp} with different perturbation functions $\delta$. We use three different values for maximal perturbation for every state and action between the human strategy and the repaired strategy and two different thresholds to satisfy the specification with $\beta=0.7$ and $\beta=0.9$.

We summarize our results in Table~\ref{tab:perturbation}. We generated three different perturbed human strategies with perturbation functions that have a different maximal perturbation (labeled as "$\delta_{\textnormal{max}}$"). We report the maximal deviation between the autonomy protocol that is synthesized using the original human strategy and the perturbed human strategy (labeled as $\delta_{\textnormal{ap}}$), the time that the synthesis process took with the QCP approach (labeled as "QCP synth."), and the maximal deviation between the perturbed human strategies and the repaired strategies (labeled as $\delta_{\textnormal{QCP}}$). 

The results in Table~\ref{tab:perturbation} show that the maximal deviation between the repaired strategy and the human strategy does not depend on the perturbation, and it depends on the threshold of satisfying the specification. The maximal deviation between the repaired strategies increases with larger perturbations introduced to the human strategy and with a larger threshold $\beta$. The values for $\delta_{\textnormal{QCP}}$ show that the maximal deviation between the human strategy and the repaired strategy does not depend heavily on a specific human strategy, and it mostly depends on the threshold. We also note that the synthesis time is similar for all cases.

%Note that this kind of MDPs resulting from grid structures are very strongly connected, resulting in a large number of transitions. Thus, the encoding in the \tool{PRISM}-language~\cite{KNP11} is very large, rendering it a very hard problem. We observe that while the procedure is very efficient for models having a few thousand states and hundreds of thousands of transitions, its scalability is ultimately limited due to memory issues. In the future, we will utilize efficient symbolic data structures internal to \tool{PRISM}. Moreover, we observe that for larger benchmarks the computation time is governed by the solving time of $\tool{Gurobi}$.
%\begin{table}[t]
%\centering
%\caption{Scalability results.}
%\label{tab:scalability}
%\scalebox{1}{\begin{tabular}{@{}crrrrrrr@{}}
%\toprule
%grid          & obst. & states     & trans.      & synth. &  total & $\delta_{\infty}$ \\
%\midrule
%$8\times 8$   & $1$   &  $2.304$    & $36.864$    & $6.30$  & $14.12$ & $0.15$                   \\
%$8\times 8$   & $2$   &  $82.944$   & $5.308.416$ & --MO--        & --MO--  & --MO--                 \\
%$10\times 10$ & $1$   &  $3.600$    & $57.600$    & $12.29$ & $23.80$ & $0.24$			\\ 
%$12\times 12$ & $1$   &  $14.400$   & $230.400$   & $157.94$& $250.78$ & $0.33$                   \\        
%\bottomrule\end{tabular}}
%\end{table}

\section{Conclusion and Critique}\label{sec:conclusion}
We introduced a formal approach to synthesize an autonomy protocol in a shared control setting subject to probabilistic temporal logic specifications. The proposed approach utilizes inverse reinforcement learning to compute an abstraction of a human's behavior as a randomized strategy in a Markov decision process. We designed an autonomy protocol such that the resulting robot strategy satisfies safety and performance specifications. 
We also ensured that the resulting robot behavior is as similar to the behavior induced by the human's commands as possible. 
We synthesized the robot behavior using quasiconvex programming.
We showed the practical usability of our approach through case studies involving autonomous wheelchair navigation and unmanned aerial vehicle planning. 

There is a number of limitations and also possible extensions of the proposed approach. First of all, we computed a globally optimal strategy by bisection, which requires checking feasibility of a number of linear programming problems. A direct convex formulation of the shared control synthesis problem would make computing the globally optimal strategy more efficient.

We assumed that the human's commands are consistent through the whole execution, \ie, the human issues each command to satisfy the specification. Also, this assumption implies the human does not consider assistance from the robot while providing commands - and in particular, the human does not adapt the strategy to the assistance. It may be possible to extend the approach to handle non-consistent commands by utilizing additional side information, such as the task specifications.

Finally, in order to generalize the proposed approach to other task domains, it is worth to explore transfer learning~\cite{pan2010survey} techniques. Such techniques will allow us to handle different scenarios without requiring to relearn the human strategy from the human's commands.

\bibliographystyle{plain}
\bibliography{literature}

\begin{thebibliography}{10}

\bibitem{abbeel2004apprenticeship}
Pieter Abbeel and Andrew~Y Ng.
\newblock Apprenticeship learning via inverse reinforcement learning.
\newblock In {\em ICML}, page~1. ACM, 2004.

\bibitem{BK08}
Christel Baier and Joost-Pieter Katoen.
\newblock {\em Principles of Model Checking}.
\newblock The MIT Press, 2008.

\bibitem{bellare1993complexity}
Mihir Bellare and Phillip Rogaway.
\newblock The complexity of approximating a nonlinear program.
\newblock In {\em Complexity in numerical optimization}, pages 16--32. World
  Scientific, 1993.

\bibitem{boyd_convex_optimization}
Stephen Boyd and Lieven Vandenberghe.
\newblock {\em Convex Optimization}.
\newblock Cambridge University Press, New York, NY, USA, 2004.

\bibitem{chen-et-al-concur-2014-pertubation}
Taolue Chen, Yuan Feng, David~S. Rosenblum, and Guoxin Su.
\newblock Perturbation analysis in verification of discrete-time {M}arkov
  chains.
\newblock In {\em CONCUR}, volume 8704 of {\em LNCS}, pages 218--233. Springer,
  2014.

\bibitem{de1997formal}
Luca De~Alfaro.
\newblock {\em Formal verification of probabilistic systems}.
\newblock Number 1601. Citeseer, 1997.

\bibitem{dehnert2017storm}
Christian Dehnert, Sebastian Junges, Joost-Pieter Katoen, and Matthias Volk.
\newblock A storm is coming: A modern probabilistic model checker.
\newblock In {\em International Conference on Computer Aided Verification},
  pages 592--600. Springer, 2017.

\bibitem{dragan-et-al-assistive-teleoperation}
Anca~D. Dragan and Siddhartha~S. Srinivasa.
\newblock Formalizing assistive teleoperation.
\newblock In {\em Robotics: Science and Systems}, 2012.

\bibitem{dragan-et-al-policy-blending}
Anca~D. Dragan and Siddhartha~S. Srinivasa.
\newblock A policy-blending formalism for shared control.
\newblock {\em I. J. Robotic Res.}, 32(7):790--805, 2013.

\bibitem{fagg2004extracting}
Andrew Fagg, Michael Rosenstein, Robert Platt, and Roderic Grupen.
\newblock Extracting user intent in mixed initiative teleoperator control.
\newblock In {\em Intelligent Systems Technical Conference}, page 6309, 2004.

\bibitem{feng2016synthesis}
Lu~Feng, Clemens Wiltsche, Laura Humphrey, and Ufuk Topcu.
\newblock Synthesis of human-in-the-loop control protocols for autonomous
  systems.
\newblock {\em IEEE Transactions on Automation Science and Engineering},
  13(2):450--462, 2016.

\bibitem{forejt2011quantitative}
Vojt{\v{e}}ch Forejt, Marta Kwiatkowska, Gethin Norman, David Parker, and
  Hongyang Qu.
\newblock Quantitative multi-objective verification for probabilistic systems.
\newblock In {\em TACAS}, pages 112--127. Springer, 2011.

\bibitem{fryer2008categorical}
Roland Fryer and Matthew~O Jackson.
\newblock A categorical model of cognition and biased decision making.
\newblock {\em The BE Journal of Theoretical Economics}, 8(1).

\bibitem{DBLP:journals/tase/FuT16}
Jie Fu and Ufuk Topcu.
\newblock Synthesis of shared autonomy policies with temporal logic
  specifications.
\newblock {\em IEEE Transactions on Automation Science and Engineering},
  13(1):7--17, 2016.

\bibitem{wheelchair-demillan}
F.~Gal{\'a}n, M.~Nuttin, E.~Lew, P.~W. Ferrez, G.~Vanacker, J.~Philips, and
  J.~del~R. Mill{\'a}n.
\newblock A brain-actuated wheelchair: Asynchronous and non-invasive
  brain-computer interfaces for continuous control of robots.
\newblock {\em Clinical Neurophysiology}, 119(9):2159--2169, 2016/05/28.

\bibitem{gurobi}
{Gurobi Optimization, Inc.}
\newblock Gurobi optimizer reference manual.
\newblock url={http://www.gurobi.com}, 2013.

\bibitem{hahn2019omega}
Ernst~Moritz Hahn, Mateo Perez, Sven Schewe, Fabio Somenzi, Ashutosh Trivedi,
  and Dominik Wojtczak.
\newblock Omega-regular objectives in model-free reinforcement learning.
\newblock In {\em International Conference on Tools and Algorithms for the
  Construction and Analysis of Systems}, pages 395--412. Springer, 2019.

\bibitem{jansen2017synthesis}
Nils Jansen, Murat Cubuktepe, and Ufuk Topcu.
\newblock Synthesis of shared control protocols with provable safety and
  performance guarantees.
\newblock In {\em ACC}, pages 1866--1873. IEEE, 2017.

\bibitem{javdani2015shared}
Shervin Javdani, J~Andrew Bagnell, and Siddhartha Srinivasa.
\newblock Shared autonomy via hindsight optimization.
\newblock In {\em Robotics: Science and Systems}, 2015.

\bibitem{kim2012autonomy}
Dae-Jin Kim, Rebekah Hazlett-Knudsen, Heather Culver-Godfrey, Greta Rucks, Tara
  Cunningham, David Portee, John Bricout, Zhao Wang, and Aman Behal.
\newblock How autonomy impacts performance and satisfaction: Results from a
  study with spinal cord injured subjects using an assistive robot.
\newblock {\em IEEE Transactions on Systems, Man, and Cybernetics-Part A:
  Systems and Humans}, 42(1):2--14, 2012.

\bibitem{kofman2005teleoperation}
Jonathan Kofman, Xianghai Wu, Timothy~J Luu, and Siddharth Verma.
\newblock Teleoperation of a robot manipulator using a vision-based human-robot
  interface.
\newblock {\em IEEE transactions on industrial electronics}, 52(5):1206--1219,
  2005.

\bibitem{KNP11}
Marta Kwiatkowska, Gethin Norman, and David Parker.
\newblock \textsc{Prism} 4.0: Verification of probabilistic real-time systems.
\newblock In {\em CAV}, volume 6806 of {\em LNCS}, pages 585--591. Springer,
  2011.

\bibitem{leeper2012strategies}
Adam Leeper, Kaijen Hsiao, Matei Ciocarlie, Leila Takayama, and David Gossow.
\newblock Strategies for human-in-the-loop robotic grasping.
\newblock In {\em HRI}, pages 1--8. IEEE, 2012.

\bibitem{nesterov1994interior}
Yurii Nesterov and Arkadii Nemirovskii.
\newblock {\em Interior-point polynomial algorithms in convex programming},
  volume~13.
\newblock Siam, 1994.

\bibitem{pan2010survey}
Sinno~Jialin Pan and Qiang Yang.
\newblock A survey on transfer learning.
\newblock {\em IEEE Transactions on knowledge and data engineering},
  22(10):1345--1359, 2010.

\bibitem{pathak-et-al-nfm-2015}
Shashank Pathak, Erika {\'{A}}brah{\'{a}}m, Nils Jansen, Armando Tacchella, and
  Joost{-}Pieter Katoen.
\newblock A greedy approach for the efficient repair of stochastic models.
\newblock In {\em {NFM}}, volume 9058 of {\em LNCS}, pages 295--309. Springer,
  2015.

\bibitem{puterman2014markov}
Martin~L Puterman.
\newblock {\em Markov decision processes: discrete stochastic dynamic
  programming}.
\newblock John Wiley \& Sons, 2014.

\bibitem{shen2004collaborative}
Jian Shen, Javier Ibanez-Guzman, Teck~Chew Ng, and Boon~Seng Chew.
\newblock A collaborative-shared control system with safe obstacle avoidance
  capability.
\newblock In {\em Robotics, Automation and Mechatronics}, volume~1, pages
  119--123. IEEE, 2004.

\bibitem{ziebart2010modeling}
Brian~D Ziebart.
\newblock Modeling purposeful adaptive behavior with the principle of maximum
  causal entropy.
\newblock 2010.

\bibitem{ziebart2008maximum}
Brian~D Ziebart, Andrew~L Maas, J~Andrew Bagnell, and Anind~K Dey.
\newblock Maximum entropy inverse reinforcement learning.
\newblock 2008.

\end{thebibliography}

%\clearpage
%\nobalance

\begin{IEEEbiography}[{\includegraphics[width=1in,height=1.25in,clip,keepaspectratio]{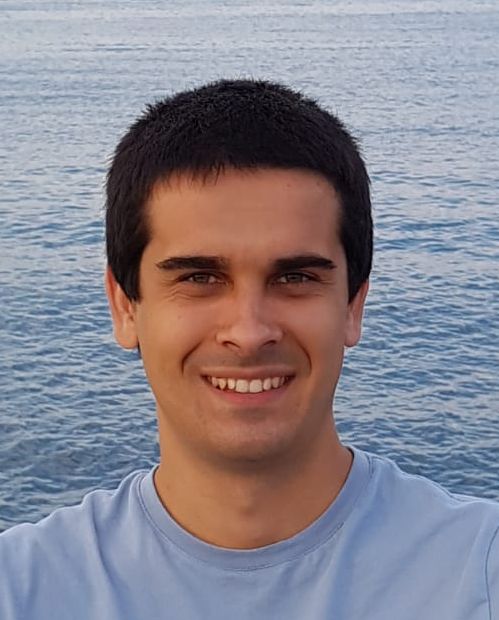}}]{Murat Cubuktepe} joined the Department of Aerospace Engineering at the University of Texas at Austin as a Ph.D. student in Fall 2015. He received his B.S degree in Mechanical Engineering from Bogazici University in 2015 and his M.S degree in Aerospace Engineering and Engineering Mechanics from the University of Texas at Austin in 2017. His current research interests are verification and synthesis of parametric and partially observable probabilistic systems. He also focuses on applications of convex optimization in formal methods and controls.
\end{IEEEbiography}

\begin{IEEEbiography}[{\includegraphics[width=1in,height=1.25in,clip,keepaspectratio]{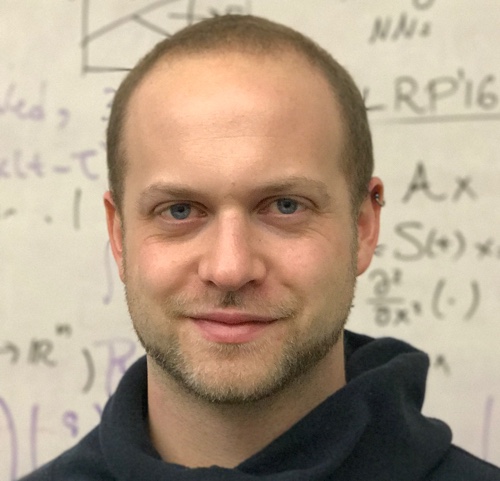}}]{Nils Jansen} is an assistant professor with the Institute for Computing and Information Science (iCIS) at the Radboud University, Nijmegen, The Netherlands. He received his Ph.D. in computer science with distinction from RWTH Aachen University, Germany, in 2015. Prior to Radboud University, he was a postdoctoral researcher and research associate with the Institute for Computational Engineering and Sciences at the University of Texas at Austin. His current research focuses on formal reasoning about safety aspects in machine learning and robotics. At the heart is the development of concepts inspired from formal methods to reason about uncertainty and partial observability.
\end{IEEEbiography}

\begin{IEEEbiography}[{\includegraphics[width=1in,height=1.25in,clip,keepaspectratio]{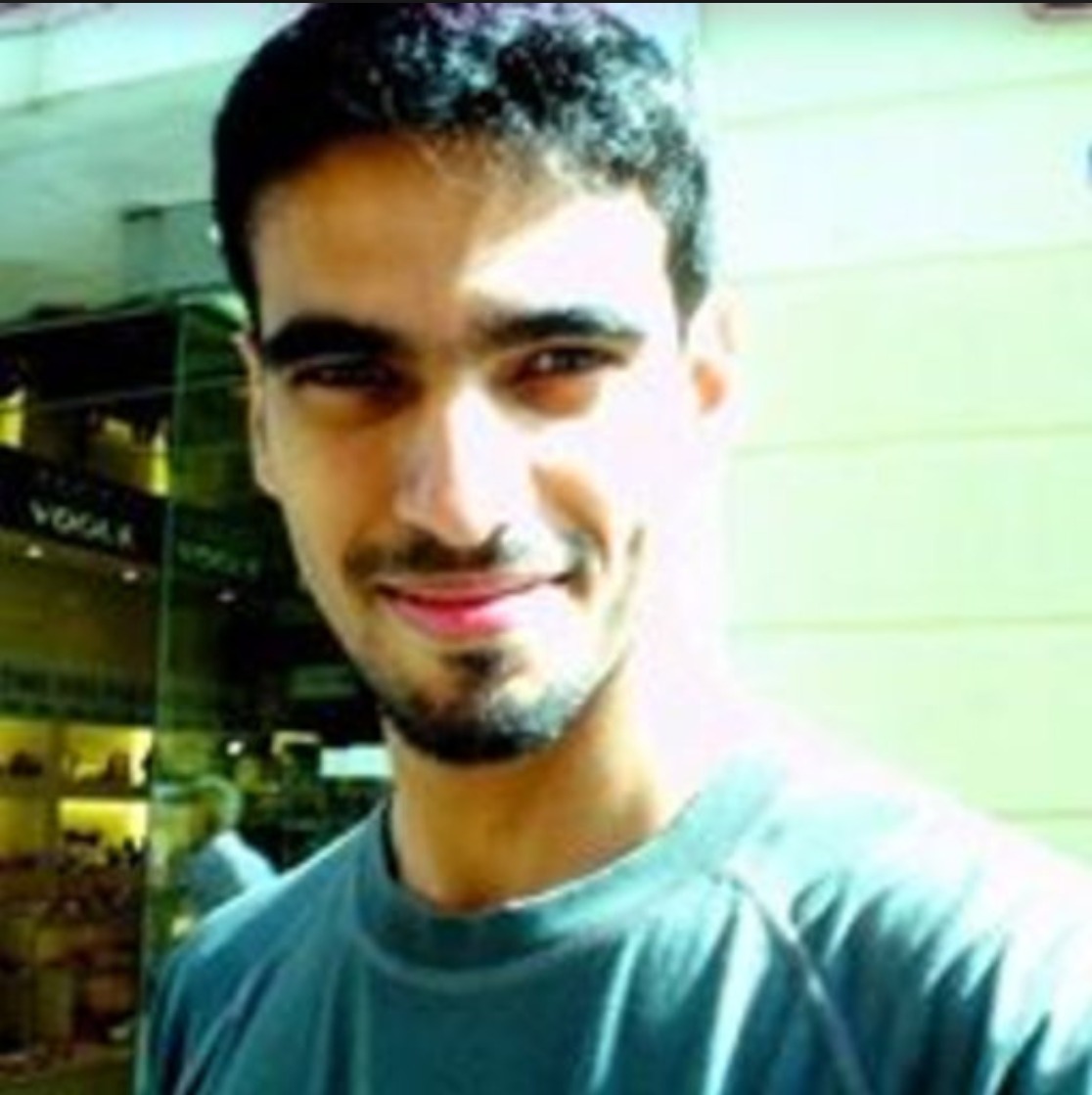}}]{Mohammed Alshiekh} was a research assistant in the Department of Aerospace Engineering at the University of Texas at Austin from 2016 to 2018. He received his BEng degree in Electrical and Electronics Engineering from the University of Birmingham in 2008 and his M.S degree in Systems Engineering from the University of Pennsylvania in 2016.
\end{IEEEbiography}

\vspace{-15cm}

\begin{IEEEbiography}[{\includegraphics[width=1in,height=1.25in,clip,keepaspectratio]{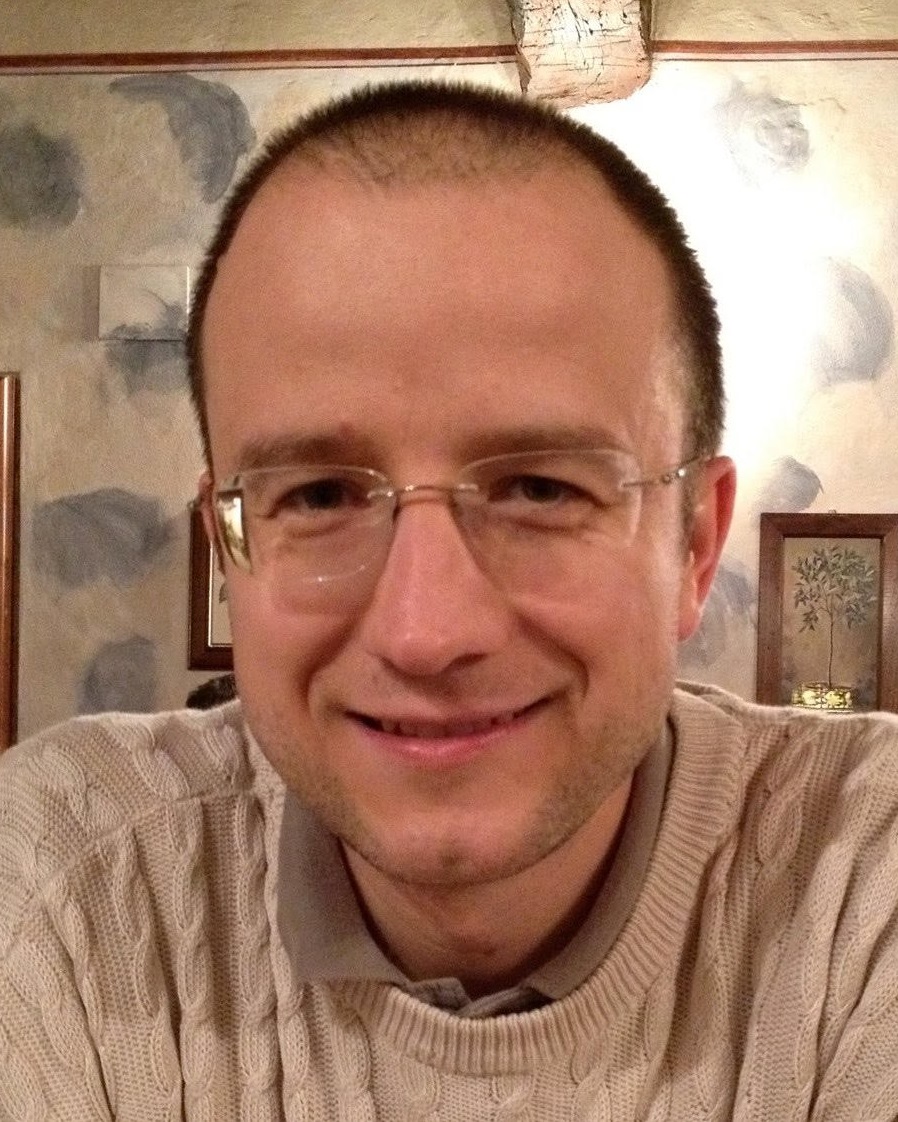}}]{Ufuk Topcu} joined the Department of Aerospace Engineering at the University of Texas at Austin as an assistant professor in Fall 2015. He received his Ph.D. degree from the University of California at
Berkeley in 2008. He held research positions at the University of
Pennsylvania and California Institute of Technology. His research
focuses on the theoretical, algorithmic and computational aspects of
design and verification of autonomous systems through novel
connections between formal methods, learning theory and controls.
\end{IEEEbiography}

%\begin{wrapfigure}{l}{3cm}
%\includegraphics[width=3cm]{cvs/nils.jpg}
%\end{wrapfigure}
%\input{cvs/nils.tex}
%%\vspace{0.5cm}
%
%\begin{wrapfigure}{l}{3cm}
%\includegraphics[width=3cm]{cvs/topcu_ufuk.jpg}
%\end{wrapfigure}
%\input{cvs/topcu_ufuk.tex}

%\input{appendix}

\end{document}